\newcommand{\tabref}[1]{\mbox{Table~\ref{tbl:#1}}}
\newcommand{\figref}[1]{\mbox{Figure~\ref{fig:#1}}}
\begin{document}


\title{Semantic constraints to represent common sense required in household actions for multi-modal Learning-from-observation robot}
\author{K. Ikeuchi \and N. Wake \and R. Arakawa \and K. Sasabuchi \and J. Takamatsu}
\institute{K. Ikeuchi \and N. Wake \and R.Arakawa \and K. Sasabuchi \and J. Takamatsu \at Microsoft, One Microsoft Way, Redmond, WA, USA \\ \email{katsuike@microsoft.com}}
\date{January 2021}

\maketitle

\section{Introduction}

Recent years have witnessed an increasing demand of service robots that can assist the elderly. Many elderlies now reside in senior residences that face support staff shortages. Further, although their residences are comfortable and well supported, the elderly still desire to live in their own homes when possible. To fulfill such requirements of living in their own homes necessitate require further support. Therefore, it is important and imminent to develop service robots that support the lives of the elderly in senior residences and/or in their own homes to meet these requirements.

The paradigm of learning-from-observation (LfO) is a promising direction toward this goal. An LfO system observes human actions and learns how to perform these actions via the observations. In senior residences and senior homes, we assume that nurses and care workers are stationed or visiting on a part-time basis.
These novice users can teach and/or modify the robot's action through the LfO system instead of manual programming.
Further, even though each home has a large variation in the environment, if necessary, such care takers can tune up the robot actions through their demonstrations on-site to absorb the environmental variations.

Thus far, most LfO systems have been developed for relatively clean environments such as machine assembly in industrial settings as done by \cite{ikeuchi1994toward} or rope handling in laboratory settings as done by \cite{Takamatsu2006}. The home environment is cluttered and household actions
have wide variations that require common sense to understand and pursuit the actions.
To overcome this cluttered environment, \cite{wake2020verbal} proposed a verbal-based focus-of-attention mechanism to guide the system's attention to the places where the key actions occur. This study overcomes the remaining issue of understanding household actions
that require common sense.

To understand household actions,
it is necessary not only to observe the demonstration, but also to have common sense that arises out of the purpose of the actions. For example, in the case of wiping a window,
it is necessary to have the common sense that the mop must maintain contact with the window surface through the wiping action; if only considered the possible physical states between the tool (mop) and the environment (window) and without common sense, the mop may detach and move freely away from the surface.

Previous research in LfO have mainly focused on only the observable physical constraints between the tool and the environment. We incorporate common sense to the paradigm by describing common sense explicitly using semantic constraints.
For example, in the previous wiping example, the wiping action is described by introducing a semantic wall in parallel with the wiped surface, and the mop can be only moved between these two surfaces: one physical and the other semantically defined imaginary surface.

In this paper,
we first analyze the
state transitions that occur because of translation and rotation displacements between
a tool and an environment
using screw theory. This analysis provides
the necessary and sufficient set of contact states and state transitions
to understand the types of semantic and physical constraints that are involved in household actions.
We further
examine real household videos and
classify the frequently occurring semantic and physical constraint 
sequential patterns into task groups.

Finally, we propose a way of recognizing task groups in an LfO system using textual/verbal input as a source of information on the semantic constraints.
Physical constraints can be observed from demonstrations; however, semantic constraints are difficult, if not impossible, to obtain from the observation of demonstrations. Therefore, we assume that textual or verbal input provides hints to obtain such semantic constraints. As preliminary experiments in this direction, we conducted experiments on a house cleaning domain to determine whether such textual input can 
achieve the recognition of task groups involving semantic constraints.

The contribution of this study are as follows:
\begin{itemize}
    \item To the best of our knowledge, this is the first paper that has thoroughly investigated the full set of necessary contact states and state transitions in translation and rotation displacements between a tool and an environment.
    \item A novel approach to represent common sense in household actions using semantic constraints is proposed, and the appropriateness of the representation is shown against top hit household YouTube videos as well as real home cooking recordings. Further, the frequent constraint patterns are organized into task groups.
    \item Preliminary results on the possibilities of recognizing task groups in an LfO system using textual input are shown to provide future directions for incorporating common sense in robot teaching.
\end{itemize}

The remainder of this manuscript is organized as follows.
Section 2 introduces related work and clarifies the objective of this paper. Section 3 establishes the basic description of states and state transitions of rigid objects.
Section 4 introduces semantic and physical constraints
from the state transitions. Section 5 
analyzes household actions using these constraint representations and extract
frequent-appearing constraint patterns as
task groups. 
Section 6 presents experiments to 
recognize task groups in an LfO system using textual input.
Section 7 concludes this paper.

\section{Related Works}
This study addresses the problem of conveying common sense within the context of robot teaching. 
We define common sense as a constraint on a motion necessary to achieve a manipulation; however, it is difficult to obtain this constraint directly from observation. 
Further, we propose the use of linguistic input to help recognize such semantic constraints.
The position of this study is explained via the review of previous robot teaching frameworks in terms of representing motion constraints and the use of language input. The review focuses on studies of object manipulation; non-manipulative applications such as navigation are beyond the scope of this study.

\subsection{Representation of motion constraints in robot teaching frameworks}
LfO is a robot teaching framework that aims to map one-shot demonstrations to robot motions through intermediate task representation, which is referred to as \textit{task model} and previously worked by \cite{ikeuchi1994toward,Takamatsu2006,Takamatsu2007,Nagahama2019,Perez-DArpino2017,Subramani2018}.
In a typical LfO system, a task is defined as the transition of a target object; for example, a contact state between polyhedral objects for part assembly by \cite{ikeuchi1994toward} or a topology of a string for tying a knot by \cite{Takamatsu2006}.
To extend the LfO to household actions,
\cite{wake2020learning} recently reported an LfO system that supports motion constraints derived from the linkage mechanism and the task representations were mapped on robots of various configuration by \cite{sasabuchi2020task}.
Although these previous systems achieved  success in specific task domains, the tasks were defined with physical constraints; semantic constraints were ignored. 
In theory, LfO can cover motions that involve semantic constraints provided a set of appropriate task models is assigned. However, to the best of our knowledge, there is no LfO framework that deals with semantic constraints. 

Learning from demonstration (LfD) and programming by demonstration are also popular frameworks for robot teaching. In this paper, we collectively refer to them as LfD.
The majority of LfD research aims to obtain a state-action pair, which is the so-called \textit{policy}, by learning repeated demonstrations as worked by \cite{Argall2009a,Billard2008,Schaal1999}.
Because human demonstration reflects the intent of the teacher, LfD systems can learn manipulations with semantic constraints (e.g., \textit{scooping things while avoiding spilling} by \cite{Akgun2012}) and physical constraints (e.g., \textit{block building} by \cite{Orendt2017}, \textit{rotating} by \cite{Liu2019a}, or \textit{scooping} by \cite{Akgun2012}).
However, because policy learning is based on the imitation of human movements, the learned manipulations do not accompany an explicit understanding of semantic constraints.
Although several LfD studies have incorporated robots' experiences by applying reinforcement learning as in the work by \cite{balakuntala2019extending,guenter2007reinforcement} or meta-learning by \cite{yu2018one}, they do not aim to learn semantic constraints. It seems that LfD systems do not learn the essence of the task (e.g., \textit{movement to avoid spilling}); instead, they learn movements that are fine-tuned to a scene by a human intention or robots' experiences.

In summary, although both LfO and LfD have the potential to deal with semantic constraints, they are based on different design concepts of intermediate task representation; LfO uses task models of state transitions, while LfD uses policies that learned repetitive human demonstrations. 
This study is based on LfO and proposes a solution that operates with an understanding of the purpose of tasks, by defining state transitions that consider semantic constraints.

\subsection{Use of language in robot teaching}
The problem of recognizing semantic knowledge from language is a form of ``symbol grounding.'' The accumulating evidence suggests that linguistic symbol grounding can solve a variety of problems in robotic applications. Examples include video object segmentation by \cite{Khoreva2018}, visual and verbal navigation by \cite{Anderson2019}, human-robot cooperation by \cite{Petit2013,Liu2019}, interactive learning by \cite{Chai2018,Mohan2012}, and bidirectional mapping between human movement and natural language by \cite{plappert2018learning}. By focusing on the function of language to provide semantic grounding, robot teaching applications have addressed linking an instructor's linguistic input to execution operations, such as pick-place by \cite{Forbes2015,Lueth2002}, grasping by \cite{Ralph2008,Wake2020grasp}, virtual-block relocation by \cite{Bisk2016}, and mobile manipulation  by \cite{Howard2014,Tellex2011}.

These studies support the idea that languages can provide promising hints for estimating semantic knowledge related to object manipulation. However, to the best of our knowledge, no attempt has yet been made to address semantic constraints in manipulation and use language to infer those constraints explicitly. Recent work closely related to our study is presented by \cite{Paulius2019,David2020}. In these studies, the authors defined a taxonomy of manipulation motions for cooking, called \textit{motion code}, and they related each motion code with verbs. Their proposed motion code is defined in terms of contact states and trajectories of motion, and covers a wide range of actions. However, they lump together actions with (e.g., \textit{pour}) and without semantic constraints (e.g., \textit{pick-and-place}). Unlike their approach, we propose an action class based on motion constraints. Further, we attempt to highlight the role of language in the recognition of semantic constraints by examining the correspondence between action classes and instructional texts, and not just that of between verbs. 

\section{Defining contact states in translation and rotation}

To analyze 
physical and semantic constraints,
admissible displacements of an object/tool (e.g., a mop) with respect to its environment (e.g., a floor)
are discussed using screw theory.  These admissible displacements characterize the state of the object in relation to the environment.
In the later sections, we will use this state characterization and transitions of these states to understand the  physical and semantic constraints involved in household actions.

\subsection{Characterizing admissible displacements}

The admissible displacements of a rigid body are constrained by rigid environment objects through point contacts. We assume that constraint points have neighborhoods, which are approximated as planar patches, thereby guaranteeing the differentiability at the contact points. Further, we do not consider singular cases such as two polyhedrons contacting an edge to another edge with each other because such singular cases rarely occur in the household action domains.

\paragraph{Constraint inequality equation}
An admissible displacement of a rigid body is formulated using screw theory by \cite{roth1984screws}. The screw theory formulates a translation displacement along the screw axis and a rotational displacement as a circular displacement around the screw axis. Any displacement constraint, provided by a contact point $\vec{P}$, is represented as a screw:

\begin{equation}
        \vec{N} \cdot \vec{T} + (\vec{P} \times \vec{N}) \cdot \vec{S}  \geq 0,
        \label{eq: screw}
\end{equation}
where $\vec{N}$ denotes the normal vector at the contact point and $\vec{S}$ denotes the screw axis vector. A translation displacement occurs along $\vec{S}$, and a rotation displacement occurs around $\vec{S}$. The ratio between the translation and rotation is specified by the parameter $p$.  $\vec{T} = \vec{C} \times \vec{S} + p \vec{S}$, where $ \vec{C} $ denotes the center of rotation. 

From this equation, we obtain
\begin{equation}
    (\vec{M} \cdot \vec{S}) + p (\vec{N} \cdot \vec{S}) \geq 0,
    \label{eq:screw}
\end{equation}
where $\vec{M} = \vec{Q} \times \vec{N}$ and $ \vec{Q} = (\vec{P} - \vec{C})$.

For a pure translation motion, $p=\infty$,
\begin{equation}
    \vec{N} \cdot \vec{S} \geq 0
    \label{eq:translation}
\end{equation}
is the constraint equation for the admissible axis directions given by a contact point. 

For a pure rotation motion, $p=0$, 
\begin{equation}
    \vec{M} \cdot \vec{S} \geq 0
    \label{eq:rotation}
\end{equation}
is the constraint equation.

In our latter analysis, we find that most household tasks can be represented as either pure translation or pure rotation motions at each moment. Exceptions include screw-closing actions; however, even in this case, the translation displacement is considerably smaller than the rotation displacement, and therefore, such actions can be considered as pure rotations for analytical purposes. We do not consider the combined case in this study. 

\paragraph{Gaussian sphere and axis directions}

We use a Gaussian sphere defined by \cite{gauss2005general,horn1986robot} as a tool to depict the screw axis. One-axial direction is represented as a three-dimensional unit vector $\vec{S}$. The Gauss mapping projects the starting point of the unit vector to the origin and end point on a point on the sphere surface (referred to as the Gaussian sphere), thereby resulting in any three-dimensional unit vector to be represented as a point on the Gaussian sphere. See Fig.~\ref{fig:gaussian-sphere}.

\begin{figure}
    \centering
    \includegraphics[width=\columnwidth]{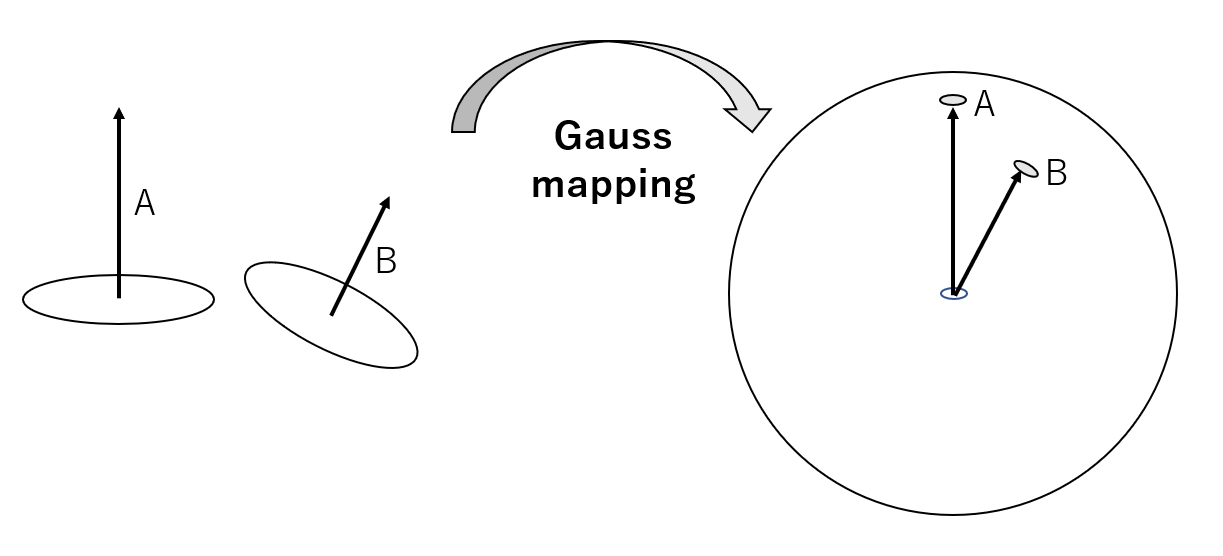}
    \caption{Gauss mapping and Gaussian sphere. A unit vector is mapped to a point on the Gaussian sphere. Its starting point and the end point are projected to the sphere center and spherical point, respectively. The point represents the vector. }
    \label{fig:gaussian-sphere}
\end{figure}

\paragraph{Pure translation}

In pure translation motion, any motion direction is aligned to the axis direction. Thus, in the later discussion on pure translations, we use an axis direction and a motion direction interchangeably. In particular, the spherical surface of the Gaussian sphere represents the space of all possible axis and transnational motion directions.

Let us consider the case when a rigid object is constrained by a point contact as shown in Fig.~\ref{fig:T1}(a). From Eq.(\ref{eq:translation}) with $p=\infty$,

\begin{eqnarray}
    \vec{N} \cdot \vec{S} & \geq & 0, \nonumber 
\end{eqnarray}
where $\vec{N}$ denotes the normal direction.

We can set the normal direction $\vec{N}$ as the north pole of the Gaussian sphere without loss of generality, as shown in Fig.~\ref{fig:T1}(b). Then, any admissible axis direction $\vec{S}$ satisfying Eq.~\ref{eq:translation} can be expressed as a point of the northern hemisphere of the Gaussian sphere, which is depicted as the white region in Fig.~\ref{fig:T1}(b). The gray southern hemisphere, which represents downward motions in Fig.\ref{fig:T1}(b), depicts nonadmissible axial directions.

An infinitesimal motion corresponding to a point on the northern hemisphere breaks this relationship. Among these motion components, the pure motion component is located along the north-pole direction. 

\begin{figure}
    \centering
       \includegraphics[width=\columnwidth]{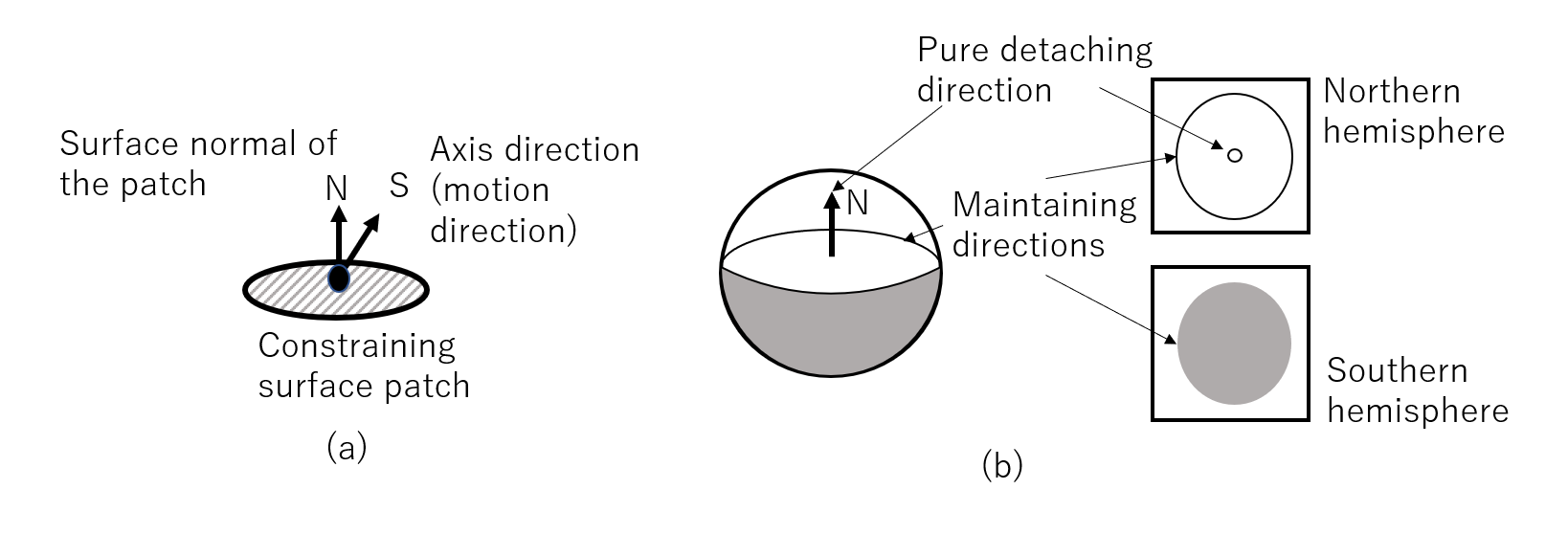}
       \caption{One directional contact. (a) A constraining surface patch has an infinitesimal neighboring region, wherein the normal direction is $\vec{N}$. The vector $\vec{S}$ denotes the one-axis direction (motion direction) of the rigid object $\vec{S}$ at the contact point. (b) Admissible and nonadmissible regions of the axes are depicted as white and gray hemispheres of the Gaussian sphere, respectively. The pure detaching direction is aligned to the north pole direction to break the contact, and the maintaining directions are depicted as equator points.}
    \label{fig:T1}

\end{figure}

When a rigid body is constrained by multiple contact points $i$, of which normal direction is depicted as $ \vec{N}_i $, the admissible axes, $ \vec {S} $ is the solution of the set of the simultaneous inequality that satisfies:
\begin{eqnarray}
\vec{N}_0 \cdot \vec{S} & \geq & 0 \nonumber \\
\vec{N}_1 \cdot \vec{S} & \geq & 0 \nonumber \\
\vdots & & \nonumber \\
\vec{N}_n \cdot \vec{S} & \geq & 0 
\label{eq:coef}
\end{eqnarray}

As shown in  equation~\ref{eq:coef}, we can collect all normal directions as a coefficient matrix $N$ of a simultaneous inequity equation. Depending on the rank of the matrix derived according to the Kuhn-–Tucker theory by \cite{kuhn1957linear}, this solution space can be classified into ten characteristic patterns with an entire spherical region, a hemispherical region, a crescent region, a polygonal region, an entire great circle region, a great semicircle region, an arc of a great circle, a pair of points, one point, and no region as the admissible axis regions. For the sake of simplicity, we group them into six groups, and define them as states and name those states as noncontact (NC), planar contact (PC), two-side planar contact (TR), one-way two-side planar contact (OT), prismatic-contact (PR), one-way prismatic contact (OP), and fully contact (FC) translation states, as shown in Figure ~\ref{fig:translation-contact}.

\begin{figure}
    \centering
  \includegraphics[scale=1.2]{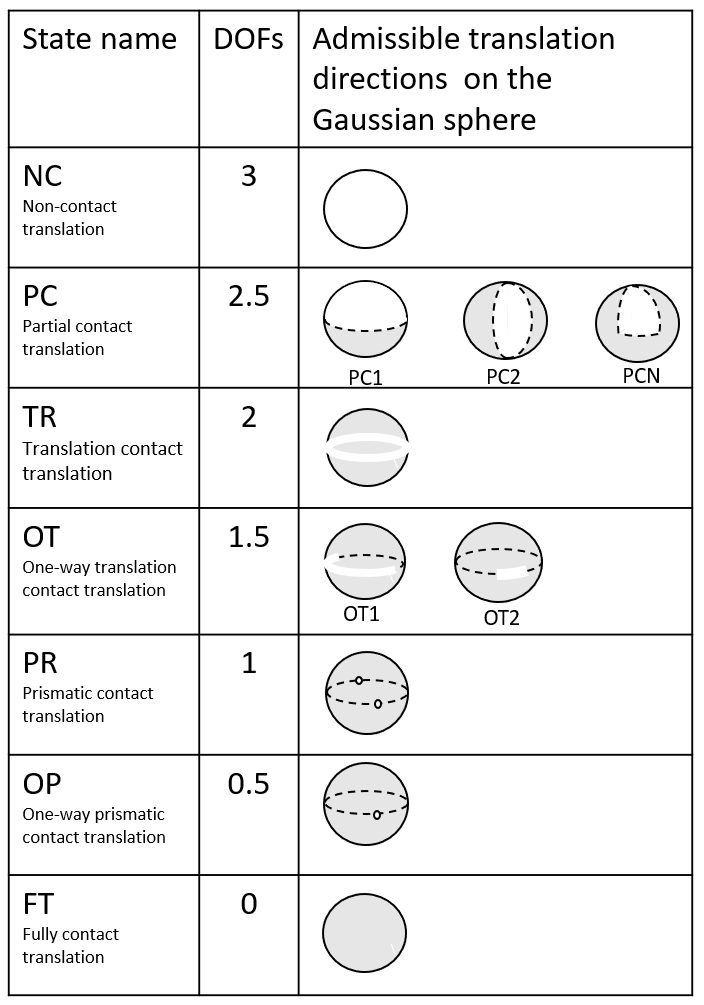}
    \caption{Seven translation contact states. For the sake of simplicity in the analysis, we grouped three partial contact states, a hemisphere, a crescent, and a polygonal shaped state, into one PC state, and two one-way states, a hemi-circle and an arc-shaped state, into one OT state.}
  
    \label{fig:translation-contact}
\end{figure}

\paragraph{Pure rotation}

For a pure infinitesimal rotation, the screw ratio $p$ becomes $0$, and then the equation is represented as: 

\begin{eqnarray}
\vec{M} \cdot \vec{S} & \geq & 0
\nonumber
\end{eqnarray}
where $\vec{M} = (\vec{Q} \times \vec{N})$ is the normal vector to the plane spanned by $\vec{Q}$ and $\vec{N}$.  

The motion constraint to the screw axis direction, $\vec{S}$, given by this point, varies depending on the value of $\vec{Q}$, which is the spatial relation between the rotation center and the contact point. When multiple contacts exist with respect to one object, its rotation is represented with respect to one common rotation center. Thus, we can set up simultaneous inequality equations using coefficients provided by known contact points and one common rotation center with respect to $\vec{S}$. Further, we can apply the Kuhn--Tucker theory and obtain 10 characteristic solution regions in the Gaussian sphere, as was the case in the pure translation.  

In the case of the finite rotation, the nonlinearity of the motion when admissible axis directions exist on the great circle in the infinitesimal analysis, are not a great circle but a combination of points and arcs that have been proved. See the Appendix for more detailed discussions and proof. In conclusion, we have 13 topological patterns of admissible axes on the Gaussian sphere: whole spherical surface, hemisphere surface, crescent region, polygonal region, a combination of arcs and points on a great circle, a pair of arcs on a great circle, two pairs of points on a great circle, a combination of arcs and points on a great semicircle, two points on a great semicircle, one arc on a great semicircle, two points on a great circle, one point, and no region.

For the sake of simplicity of the analysis, we group them into seven groups and named nonrotational contact (NR), rotational contact (RT), spherical joint (SP), one-way spherical joint (OS), revolute joint (RV), one-way revolute joint (OR), and fully constrained (FR) rotation states, as shown in Figure~\ref{fig:rotation-states}.

\begin{figure}
    \centering
    \includegraphics[scale=1.2]{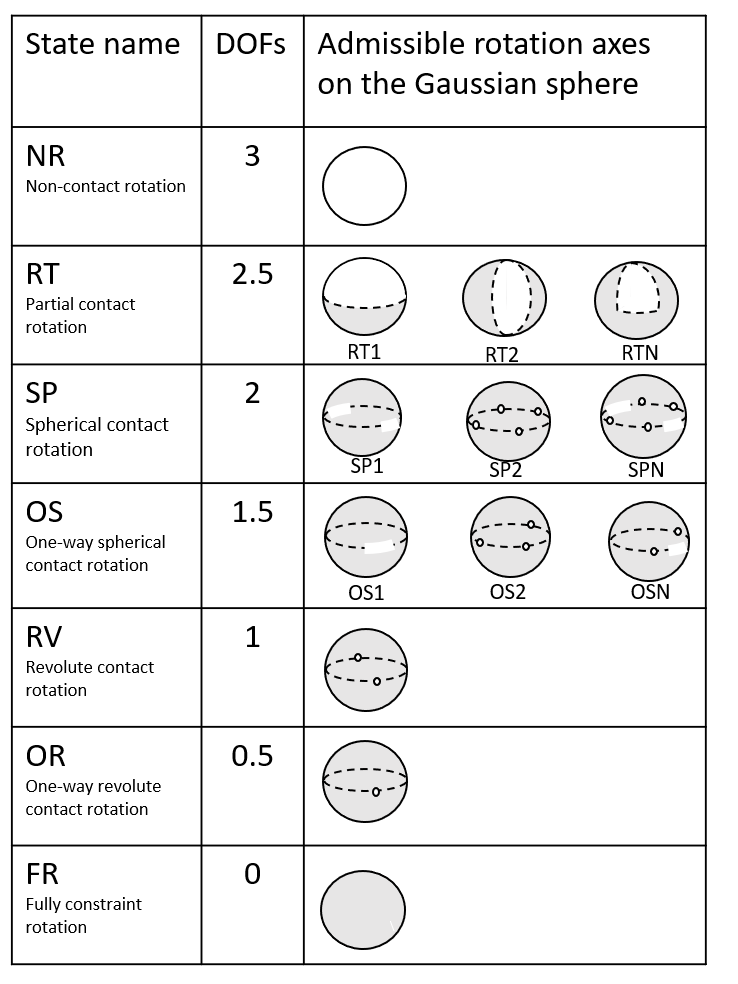}
    \caption{Rotation states.}
    \label{fig:rotation-states}
\end{figure}

\section{Physical and semantic constraints}

\subsection{Physical constraints and states}

We express the degree of freedom of an object by using the admissible axis directions of a screw represented as admissible regions on the Gaussian sphere. This representation is used to describe the constraints in household actions. 

Let us consider the example shown in Fig.~\ref{fig:draw}. In Fig.~\ref{fig:draw}(a), the drawer is pulled out halfway. This condition allows us either to pull it out or to push it a little bit along the drawing direction. The admissible axis directions in translation are represented as a pair of points on the Gaussian sphere: the PR translation state. The drawer cannot rotate around any axial directions, and there are no admissible axial directions on the Gaussian sphere: the FR rotation state. We can label the drawer state as the (PR,FR) state. 

In a completely closed state, as shown in Fig.~\ref{fig:draw}(b), the drawer can only move toward the opening direction. The admissible axis direction in translation is represented as one point on the Gaussian sphere, which is the OP translation state; it cannot rotate as well. Thus, we can label the drawer fully closed as the (OP, FR) state.

\begin{figure}[h]
    \centering
    \begin{tabular}{cc}
    
    \begin{minipage}{0.5\hsize}
    \centering
    \includegraphics[scale=1.0]{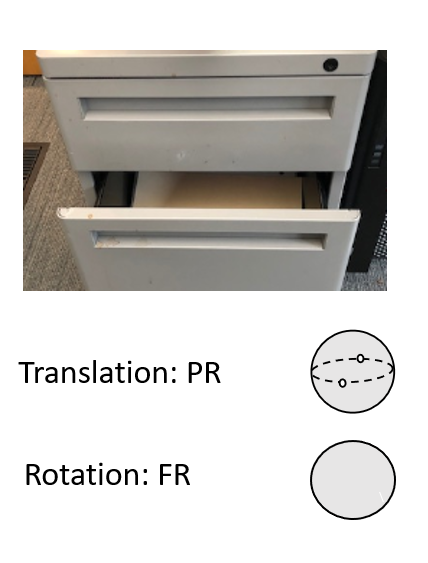}\\
    (a) 
    \end{minipage}
    
    \begin{minipage}{0.5\hsize}
    \centering
    \includegraphics[scale=1.0]{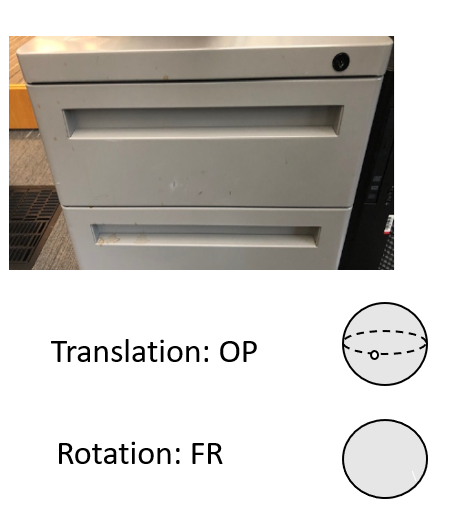} \\
    (b)
    \end{minipage}
    \end{tabular}
    \caption{States of a drawer. (a) Halfway open; One full DOF in translation: a pair of points in the Gaussian sphere, PR translation state. Zero DOFs in rotation: no admissible directions on the Gaussian sphere, FR rotation state (PR, FR). (b) Fully closed; Half DOFs in translation: one point on the Gaussian sphere, (OP) translation state. Zero DOFs in rotation, i.e., no point on the Gaussian sphere, FR rotation state, (OP, FR).}
    \label{fig:draw}
\end{figure}

Fig.~\ref{fig:statesintasks} expresses states of various household objects as the combinations of translation and rotation states.  Further, we can add other singular cases as shown in the figure. The degrees of freedom of a rigid body in contact with its environment object depends on where to take the position of the screw axis, i.e., the rotation center. Hereafter, unless the rotation center is specifically described, it is considered to be inside the convex hull of the object.

\begin{figure*}
    \centering
    \includegraphics{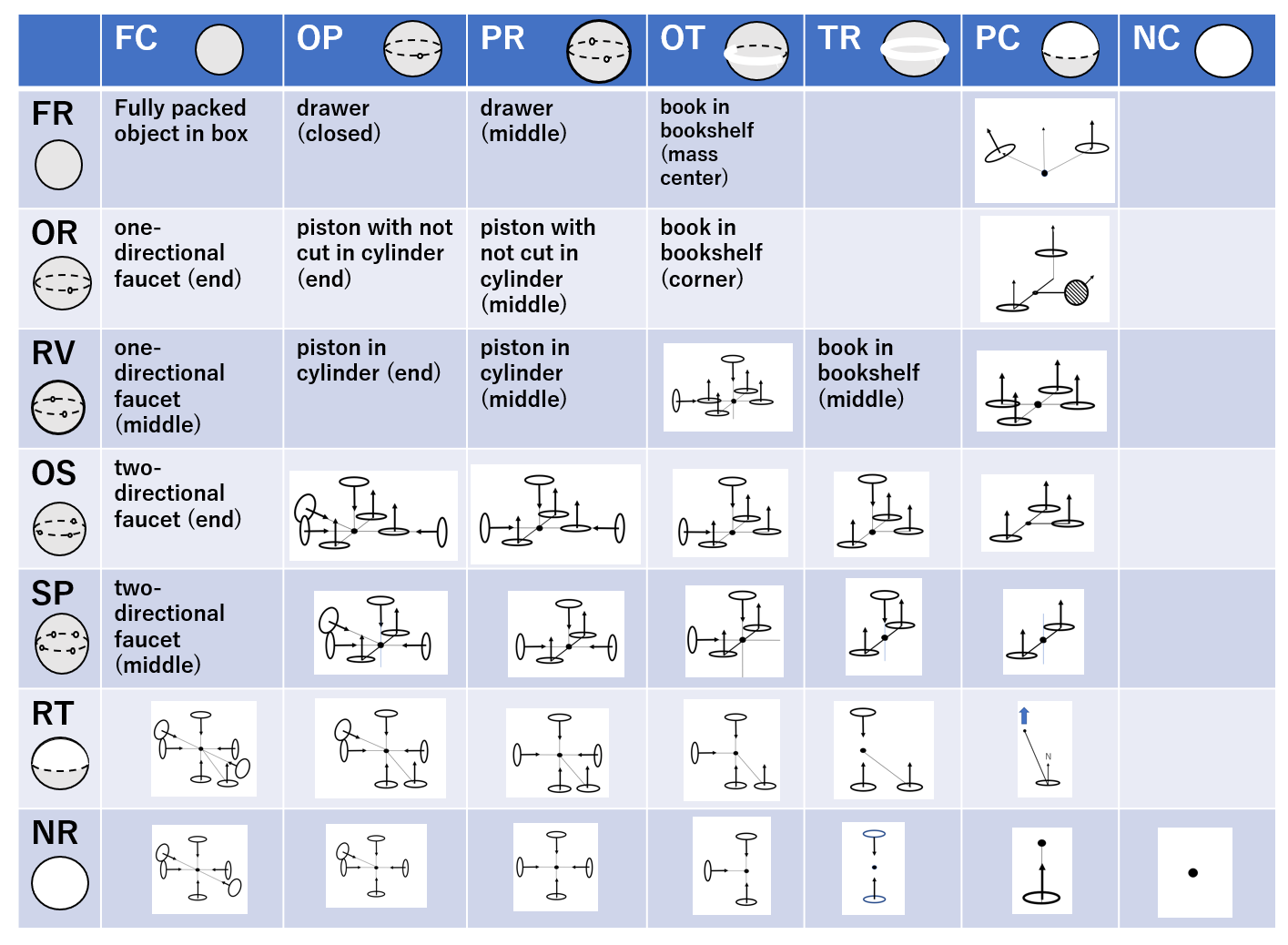}
    \caption{Various configurations of contact points and resulting states.}
    \label{fig:statesintasks}
\end{figure*}

\subsection{Semantic constraints and states}

In household
actions on top of the physical constraints provided by the environment, we need to consider semantic constraints. Semantic constraints express tips from the common sense, which are hidden human intentions. Let us consider an example of a cup filled with water. If we rotate this cup along the horizontal axis, the water will spill. Using our common sense, we usually avoid such spills; thus, adding a semantic constraint, given from the tip of ``it is not a good idea to spill out during the relocation of the cup,'' the possible axis direction of the cup in rotation during the relocation is a pair of points along the gravity direction, i.e., the RV (Revolute joint) contact state in the rotation. Thus, by considering the semantic constraint, the state of the cup is represented as the semantic state (NC, RV) instead of the original physical state (NC, NR).

Another example of semantic constraints is the state of a mop at the beginning of a floor-cleaning action. Under physical constraints, the state of the mop is represented as (PC, RV), as shown in Figure~\ref{fig:sem}(a). When one begins to clean the floor, the mop should not leave the floor or else we cannot clean the floor. Thus, the translation state under the semantic constraint is the TR. Moreover, because we will clean an inside area of the current mop position, it is only allowed to move a direction toward the center and not outside, i.e., the OT contact state. Thus, the mop state is represented as (OT, RV) instead of (PC, RV) because of the semantic constraints given by the common sense of a cleaning-floor action. See Figure~\ref{fig:sem}(b).

\begin{figure}
    \centering
    \includegraphics[width=\columnwidth]{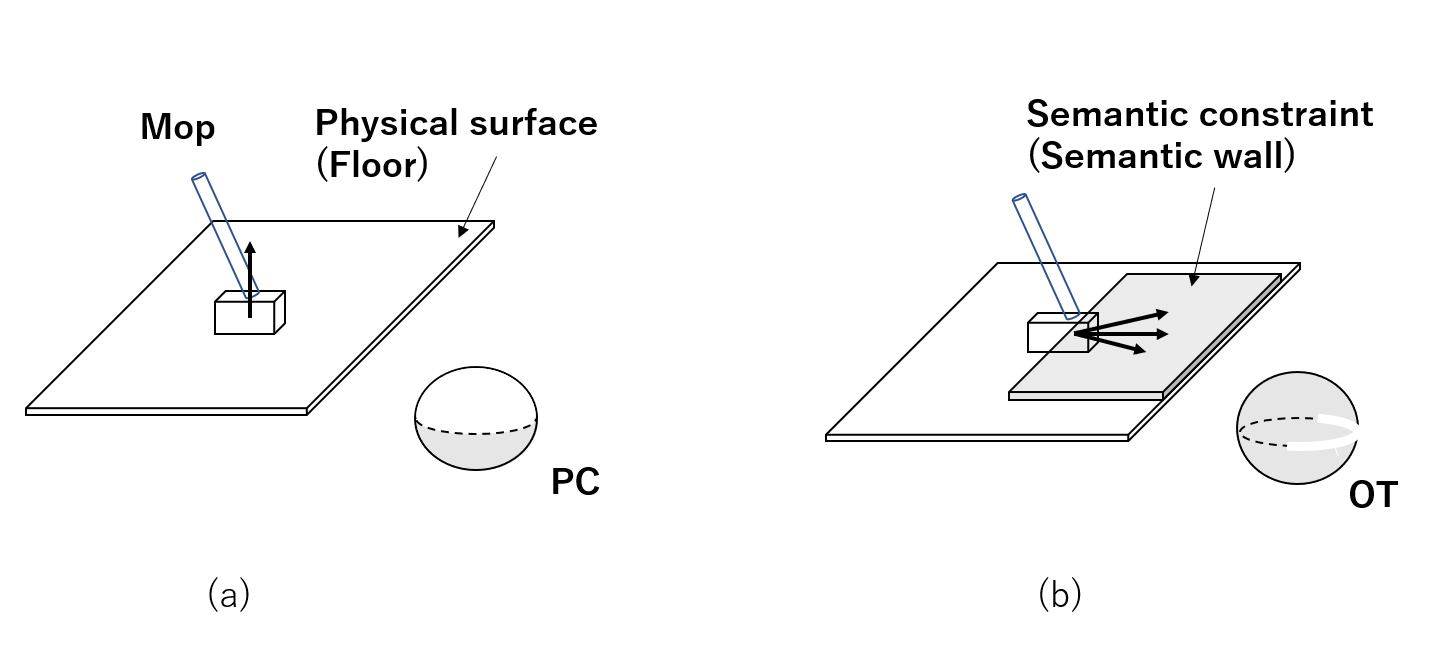}
       \caption{Physical and semantic constraints. (a) A mop contacts the floor. It can move upward, and the hemisphere is an admissible region; PC. (b) The mop cannot leave the floor surface because of the semantic constraints obtained through the common sense of ``cleaning-the-floor.'' The admissible region is an arc of a great circle, OT.} 
    \label{fig:sem}
\end{figure}

\section{Household action analysis and task groups}

A frequently appearing sequential pattern of physical and/or semantic constraints (i.e., fixed sequence of state transitions) is referred to as a task group.
These task groups can be used as an abstract concept which defines
what to do when designing and planning robot actions.
To determine the task groups in household actions,
we take two approaches. The first is to analyze household action in our own cooking videos directly and the second is to analyze YouTube captioning.

The first approach utilizes videos of making miso-soup and beef-stew from the preparation state to the serving stage, recorded by one of our collaborators with a careful selection of camera positions to avoid possible occlusions of hand motions. We then segment these videos into intervals surrounded by grasp-release pairs based on our hand speed segmenting program invented by \cite{yanokura} and describe those intervals using 
the state changes in physical and/or semantic constraints. 

The second approach is based on YouTube captioning. We collected 80 YouTube videos corresponding to cooking, carpet cleaning, floor cleaning, and furniture cleaning. Some of them are difficult to apply to our hand-speed segmentation program because of the blurred recording of hand movements. Thus, we instead collect the video captioning and extract verb-noun pairs. The action scenes associated with those verb-noun pairs provide state transitions. 

Finally, we merge the first and second results and extract
the task groups.
We can categorize the extracted task groups into three classes: physical, semantic, and multistage. Physical-class groups only need to consider transitions in physical constraints, while semantic-class groups consider their purpose of actions under semantic constraints. Multistage-class groups can be decomposed into multiple component groups defined in the previous two classes.

For numbering the following physical class, some numbers are intentionally omitted so that groups between physical and semantic classes correspond with each other. Moreover, some state transitions predicted in the screw theory rarely occur in household actions;
these are also omitted from the numbering.
In the following discussion, physical, semantic, and multistage groups are abbreviated as PTG, STG, and MTG, respectively.

\subsection{Physical class}

\paragraph{PTG1: Relocation group}

The first sequence pattern that frequently appeared in the data of household actions we used relate to relocating a target object, that is, picking, bringing, and placing. The state transitions involved are summarized in Table~\ref{tab:place}.

The {\it PTG11: Picking} task group lifts a target object, such as a box, from an environment, such as a table. At the beginning of the action, the box contacting the table surface can only translate in upward directions (PC state) and can rotate around the surface normal of the table (RV state). Thus, the box state begins with 
(PC, RV).
Immediately after the action of lifting breaks contact between the object and the environment; 
the box can translate in any direction (NC state) and can rotate about any axis (NR state). Thus, at the end of the action, the object state transitions to 
(NC, NR). 

In the data we used, it was rare for a single infinitesimal displacement to break the contact.
Instead, the displacement is followed by a finite displacement,
that is, NC-NC rather than a single NC, and NR-NR rather than a single NR. Therefore, the {\it Pick} action is more precisely a transition of translation states PC-NC-NC and rotation states RV-NR-NR. 

The {\it PTG12: Bringing} task group brings the target object from one location in the air to another location in the air. This group was often surrounded by the {\it Picking} and/or {\it Placing} group. 

The {\it PTG13: Placing} task group, which is an inverse of the previous {\it Picking} group, places an object on the environment surface. Initially, the object is in the air,
beginning with a
(NC, NR) state. Once the object contacts the environment surface, the movable direction of the object is in an upward direction, and the rotation axis is constrained along the surface normal: it is thus in the (PC, RV) state.

\begin{table}[h]
    \centering
    \caption{PTG1: Relocation task group}
    \label{tab:place}
    \begin{tabular}{|l||c|c|}
    \hline
    Task Group & Translation &  Rotation\\
     \hline
    PTG11: Picking & PC-NC-NC & RV-NR-NR \\
       \hline
   PTG12: Bringing & NC-NC & NR-NR \\
       \hline
      PTG13: Placing &  NC-NC-PC & NR-NR-RV \\
        \hline 
    \end{tabular}
    \end{table}

\paragraph{PTG3: Drawer task group}
This pattern
pulls out a target object, such as a drawer, from an environment, such as a chest,
When the drawer is closed, the physical state of the drawer is an OP translation state: it can only move in the direction of the opening. In the middle, it can move in either the open or close direction: it is in the PR translation state. Once fully opened, it can only move in the direction of closing, OP. As for the rotation, the drawer
cannot rotate in the FR rotation state. 
The state transitions are summarized in Table~\ref{tab:drawer}.

\begin{table}[h]
    \centering
    \caption{PTG3: Drawer task group}
    \label{tab:drawer}
    \begin{tabular}{|l||c|c|}
    \hline 
       Task Group & Translation &  Rotation \\
       \hline
      PTG31: DrawerOpening & {OP-PR-PR} &  \\
        \cline{1-2}
       PTG32: DrawerAdjusting & PR-PR & FR-(FR)-FR \\
        \cline{1-2}
        PTG33: DrawerClosing & PR-PR-OP & \\
        \hline 
    \end{tabular}
    \end{table}

\paragraph{PTG5: Burner task group} This pattern
rotates a target object, such as a burner switch, for which the rotation center is physically fixed. The rotation state
transitions according to the
opening, adjusting,
or closing phase of the switch. For example, 
in the opening/closing phase, the switch rotates only in the opening/closing direction and
is in the OR rotation state. In the adjusting phase,
the switch can rotate in either direction, that is, in the RV rotation state.
The rotation center is physically fixed, i.e., it is in the FC translation state.
Household actions such as opening a door also belong to this group by
considering that the center of rotation is at the hinge of the door.
The state transitions are summarized in Table~\ref{tab:turn}. 

\begin{table}[h]
    \centering
    \caption{PTG5: Burner task group}
    \label{tab:turn}
    \begin{tabular}{|l||c|c|}
    \hline 
       Task Group & Translation &  Rotation \\
       \hline
      PTG51: BurnerOpening & \multirow{3}{*}{FC-(FC)-FC} & OR-RV-RV \\
        \cline{1-1} \cline{3-3}
      PTG52: BurnerAdjusting &  & RV-RV \\
        \cline{1-1} \cline{3-3}
      PTG53: BurnerClosing &  & RV-RV-OR \\
        \hline 
    \end{tabular}
    \end{table}

\subsection{Semantic class}
Household actions require many tips that cannot be expressed by physical constraints and are obtained from semantic constraints given by the common sense involved in performing household actions. 

\paragraph{STG1: RelocationCarefully task group}

From the many household actions requiring semantic constraints that appeared in the data we used, the first pattern
is to pick, bring, or place a cup with liquid inside. If we ignore whether the liquid spills or not, the cup can be rotated freely during the relocation in the air. That is, in the NC rotation state. However, the common sense for relocating such a cup to avoid spilling includes the tacit knowledge that
you need to carry the cup so
that the normal direction of the liquid surface is aligned along the direction of gravity. 

We will refer to this semantic constraint of aligning the direction of an object to the direction of gravity as a {\it semantic ping} constraint; we should not rotate the cup about the axis perpendicular to the direction of gravity as if there is an imaginary ping standing from the water surface. See Figure\ref{fig:pingwall}(a). This constraint is semantically represented as the RV rotation state, instead of the NR. In terms of translation motions of the cup, there is no difference between carrying an empty cup and a cup with liquid. Therefore, these state transitions can be summarized as shown in Table~\ref{tab:bgc}.

\begin{table*}[h]
    \centering
    \caption{STG1: RelocationCarefully task group }
    \label{tab:bgc}
    \begin{tabular}{|c|l||c|c|}
    \hline 
      Semantic constraint  &Task Group & Translation & Rotation \\
       \hline
       \multirow{3}{*}{Semantic ping}
    
     &  STG11: PickingCarefully  & PC-NC-NC &  \\
        \cline{2-3}
   & STG12: BriningCarefully & NC-NC & RV-(RV)-RV \\
       \cline{2-3}
     &  STG13: PlacingCarefully & NC-NC-PC &  \\
       \hline
    \end{tabular}

\end{table*}

\paragraph{STG2: Wiping task group}
This pattern involves the planar translation of a tool on the environment surface, such as a table or a window. While the tool has a PC state in translation under physical constraints, the cleaning action cannot be achieved when the tool is detached from the environment surface. To avoid such detaching motion, we need to introduce 
a {\it semantic wall} parallel to the environment surface to make the contact state TR instead of PC in the translation state.
See Figure~\ref{fig:pingwall}(b).
To start/end the wiping action, the tool should move
in one direction,
and thus, it should have a OT translation state at the beginning or end. 

As for the rotation, small rotations around the surface normal of the environment are allowed for wiping a floor. However, since this rotation movement is not an intentional one,
we will simplify the rotation state and represent it as a FR state.

We can further divide this task group into three task groups, Wiping-start (OT-TR-TR), Wiping-middle (TR-TR) and Wiping-end (TR-TR-OT), as was the case to divide STG1 (RelocationCarefully group) into 
STG11, STG12, and STG31.
However, in the granularity of the semantic task groups, 
and from the analyzed video captions,
it should be rare for a human demonstrator
to use such fine detailed instructions. Thus, in this paper, we use
STG2 instead of STG21, STG22 and STG23
for the sake of simplicity. If necessary, of course, it is possible to use STG21, STG22 and STG23
for further detailed analysis. The similar argument can be applied in the following semantic task groups. See Table~\ref{tab:semsum}.

\begin{figure}[h]
    \centering
    \includegraphics[width=\columnwidth]{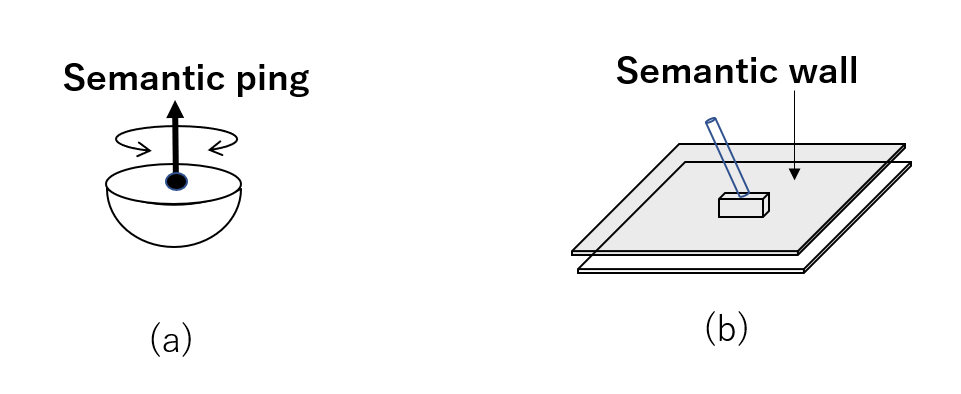}
    \caption{Semantic ping and wall. (a) Semantic ping. (b) Semantic wall.}
    \label{fig:pingwall}
\end{figure}

\paragraph{STG3: Peeling task Group}

In this pattern,
a tool, such as a peeler, needs to make a linear translation on the environment surface, such as a radish, which we refer to as the {\it semantic tube} constraint. While the tool has a PC translation state under physical constraints, the peeling action can be achieved only when the peeler moves along an imagery tube parallel to the radish surface; the peeler's state can be described as the PR translation state. To start and end the task, the peeler should move 
in one direction:
it should have the OP translation state.
For the rotation, rotations during peeling are not allowed; its rotation state is in the FR state. See Figure~\ref{fig:peeling-group}(a) and Table~\ref{tab:semsum}.

We found two minor variations of this {\it STG3: Peeling} group in the data we used.
The first variation, {\it  STG3b: AddingKetchup} occurs in actions such as sprinkling ketchup in the bottle. It starts a linear motion in one direction as the OP state, continues its translation in the PR state along the inside of an imaginary tube, and then stops abruptly in the OP state as if it hits the bottom of the tube. See Figure~\ref{fig:peeling-group}(b). To specify this abrupt stop, we note the state as OPP instead of OP. 
This {\it AddingKetchup} action group is extracted from the scene of adding ketchup, adding Tabasco, and adding pepper to the food.

In a similar manner, the second variation---a {\it STG3a: CrackingEgg} action—involves cracking the shell of an egg, which abruptly starts as a linear motion and continues the linear motion until it cracks the shell. We can specify this state transition as OPP-PR.

\begin{figure}[h]
    \centering
    \includegraphics[width=\columnwidth]{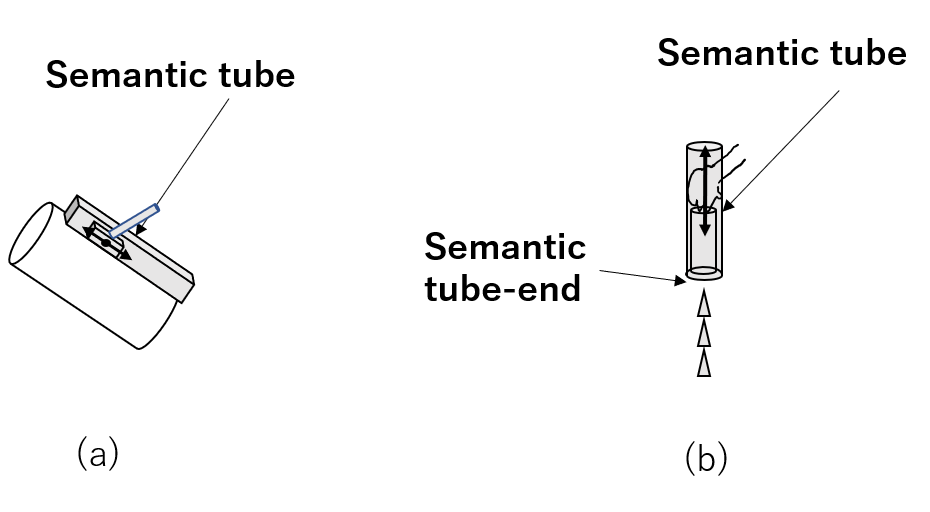}
    \caption{Peeling task group. (a) In a Peeling action, the tool makes a linear translation along a semantic tube parallel to the environment surface.  (b) In an AddingKetchup action, an object makes a sudden stop to sprinkle the contents after a linear motion as if it hits the bottom of the semantic tube.}
    \label{fig:peeling-group}
\end{figure}

\paragraph{STG4: CleaningBowl task group}

A spherical motion occurs when wiping the inner surface of a bowl or pot, i.e, SP rotation state. We refer to this as the {\it Semantic sphere} constraint. Since the center of rotation does not translate during such spherical rotation,
the translation state is maintained as FC state during this action. 

The rotation state transition begins by wiping in one direction, thus is in the OS state.
In the middle of the action, the wiping can rotate
in two axes, thus transitions to the SR state. 
The action finally ends in the OS state. This group is summarized as shown in the semantic sphere row in Table~\ref{tab:semsum}.

\paragraph{STG5: Pouring task group}

This pattern
involves pouring a liquid or semi-liquid from a container to somewhere. Because the container is in the air, the physical condition does not impose any constraint on the container's translation and rotation; it can move or rotate freely under physical constraints. However, for pouring something, it is necessary to fix the flow exit of the container as the FC translation state. The container must rotate about this flow exit. We refer to this as the {\it semantic hinge constraint}. It requires the OR state to start pouring. For avoiding over-pouring, it also requires to OR state at the end. See Figure~\ref{fig:pouring-group}(a).

Another household action that appeared in our data which belonged to this group is a sprinkling action. This action sprinkles something evenly from a hand to a surface, usually a limited place such as the inside of a pan. This action looks slightly different from the pouring action.
However, such an action can be considered as a pure rotation motion with respect to the center of the receiving area, as shown in Fig.~\ref{fig:peeling-group}(b). During this action, the rotation center does not move in the FC translation state. The rotation axis is aligned to the direction perpendicular to the direction of gravity in the RV rotation state. To start the action, the rotation direction is one-way, and to stop, the rotation direction is again one-way. 

The difference between the sprinkling action and pouring action is the location of the rotation center, and the rotation center of a sprinkling action is outside of the convex hull of the hand and the tool, while the pouring action's center is inside of the convex hull. However, in terms of state transition, they belong to the same task group 
{\it STG5}. See Figure~\ref{fig:pouring-group}(b).

\begin{figure}[h]
    \centering
    \includegraphics[width=\columnwidth]{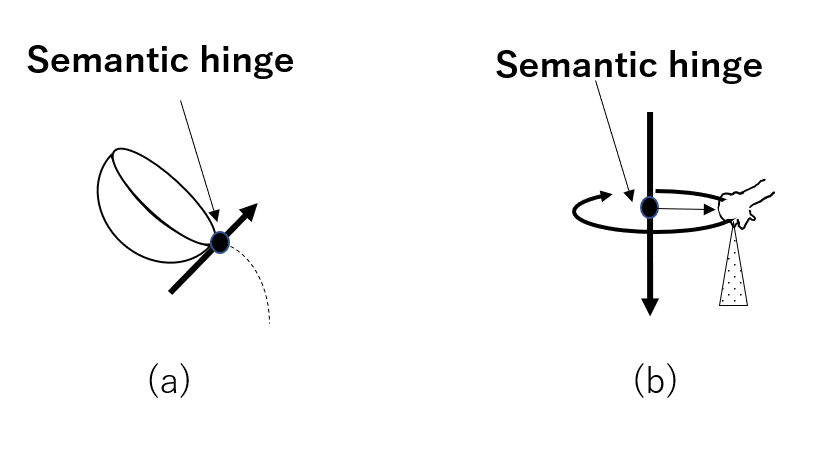}
    \caption{Pouring group. (a) A pouring action rotates an object with respect to an imaginary rotation center owing to the semantic hinge constraint. (b) A sprinkling action rotates an object to sprinkle the contents evenly over the environment based on a rotation motion with respect to the semantic hinge.}
    \label{fig:pouring-group}
\end{figure}

\paragraph{STG6: Holding task group}
When cutting an object, such as a radish, one hand holds a knife and the other hand holds the radish for support. This holding action for supporting can be semantically specified as the FC in translation and FR in rotation, which we refer to as the {\it semantic box constraint}. Thus, 
the object's state stays at
FC in translation and FR in rotation as if 
packed inside of an imaginary box. See \figref{spherebox}(b).

A similar holding action often occurred in the recordings we used, such as 
when collecting running water with a bowl. For this action, the bowl can be rotated around the axis parallel to the direction of gravity while receiving the water. However, 
this was rarely found in our data, thus
we classify these actions semantically to have the FR rotation state. See \figref{spherebox}(a).

\

\begin{figure}
    \centering
    \includegraphics[width=\columnwidth]{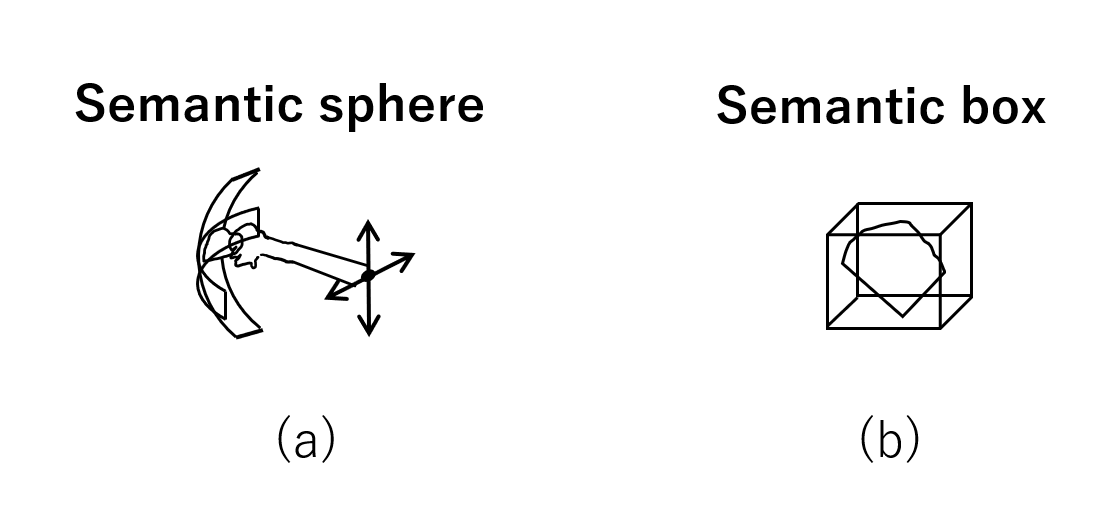}
    \caption{Semantic sphere and semantic box. }
    \label{fig:spherebox}
\end{figure}

\begin{table*}[h]
    \centering
    \caption{STG2 to STG6}
    \label{tab:semsum}
    \begin{tabular}{|l|l||l|l|}
    \hline 
    Semantic constraint & Task Group & Translation & Rotation \\
       \hline \hline
   Semantic wall & STG2: Wiping & OT-TR-OT & \multirow{4}{*}{FR-FR-FR} \\
   \cline{1-3}
   \multirow{3}{*}{Semantic tube} & STG3: Peeling & OP-PR-OP &   \\
    & \ \ STG3a: CrackingEgg & OPP-PR-OP & \\
    & \ \ STG3b: AddingKetchup & OP-PR-OPP & \\
   \hline
   Semantic sphere & STG4: CleaniningBowl & \multirow{3}{*}{FC-(FC)-FC} & OS-SP-OS \\
   \cline{1-2} \cline{4-4}
   Semantic hing & STG5: Pouring &  & OR-RV-OR \\
   \cline{1-2} \cline{4-4}
   Semantic box & STG6: Holding &  & FR-FR \\
       \hline
    \end{tabular}
\end{table*}

\subsection{Multistage class}
Some household actions 
need to be considered as multistage actions.
We analyze such multistage actions; the multistage task groups is abbreviated as MTGs. 

\paragraph{MTG1: Cutting task group}
Actions found in our data such as cutting a radish or a carrot
had the same pattern of being a two-stage action. In the first stage, the knife is translated between two walls in a certain direction, as TR state, until a corner of the knife hits the cutting board. After that, in the second stage, the knife rotates around the contact point until the cutting edge of the knife sits parallel to the cutting board.

The first stage starts from an OT translation state and then
continues its translation motion to transition to
a TR state, 
and finally ends its translation motion 
to an OT state. The rotation is considered to be a 1D rotation as the RV state because the knife 
can rotate with respect to the axis orthogonal to the knife's plane. Thus, the first stage 
has the pattern of
the {\it STG2: Wiping} task group.

In the second stage, the one-way 1D rotation of the knife starts 
in an OR state with respect to the corner of the knife, 
then the knife transitions to a RV state and continues the plane rotation. 
Finally, the knife ends at a OR state 
when the other corner of the knife hits the cutting board. Because the rotation center does not translate during this rotation, the knife has the FC translation state. Thus, the second stage 
has the pattern of
the {\it STG5: Pouring} task group. See Table~\ref{tab:cut}.

\begin{table*}[h]
    \centering
    \caption{Cutting group}
    \label{tab:cut}
    \begin{tabular}{|c|l||c|c|}
    \hline 
     MTG1 & Component task group &  Translation & Rotation \\
       \hline
      \multirow{2}{*}{Cutting} & STG2:Wiping & OT-TR-OT & RV-RV-RV \\
        \cline{2-4}
              & STG5:Pouring & FC-FC-FC & OR-RV-OR \\
        \hline
        
    \end{tabular}

\end{table*}

\paragraph{MTG2: Flipping task group}
A second multistage task group we found was
a flipping action which can be divided into three stages. In the first stage, a 
food, such as a sunny-side egg, is put on a
tool such as a turner, and then both objects depart from the environment, such as a ply pan.
This sequence of transitions is the same pattern as
the \textit{STG11: PickingCarefully} task group. In the second stage, both the food and the tool are rotated in the air,
the same pattern as the \textit{STG4: Pouring task} group. In the final stage,
the two objects land
with the food on the bottom and the tool on the top,
the same pattern as the \textit{STG13: Placingcarefully} task group. See Table~\ref{tab:flip}.

\begin{table*}[h]
    \centering
    \caption{Flipping group}
    \label{tab:flip}
    \begin{tabular}{|c|l||c|c|}
    \hline 
   
     MTG2 & component task group & Translation & Rotation \\
       \hline
       & STG11: PickingCarefully & PC-NC-NC & RV-RV-RV \\
        \cline{2-4}
      Flipping & STG4: Pouring &  FC-FC-FC & OR-RV-OR \\
        \cline{2-4}
       &  STG13: PlacingCarefully & NC-NC-PC & RV-RV-RV \\
        \hline   
    \end{tabular}

\end{table*}

\subsection{Exceptional cases}

We examine to what extent the set of state transitions 
found in the previous section covers the set of all possible transitions that may occur in theory, that is, how much the sufficient set covers the necessary set. 

For the transitions of translation states under translation displacements, 13 possible transitions from the original 10 states given by the Kuhn–-Tucker theory were previously extracted by \cite{ikeuchi1994toward}. This paper groups the original 10 translation states into six states according to their DOFs and provides eight interstate transitions and four intrastate transitions, as shown in Fig.~\ref{fig:state}(a).

For the transitions of rotation states under rotation displacements by proceeding with a similar analysis, the details of which are given in Appendix B, nine interstate transitions and three intrastate transitions are obtained. See Fig. ~\ref{fig:state}(b).

We can label the 
task groups 
composed by the found state transitions,
on the graphs illustrated in Fig. ~\ref{fig:state}. 
Despite our best efforts, some transitions have not been observed in the videos, and these include NC-TR, NC-PR, PC-TR, PC-PR, and TR-PR.

For example, as shown in Figure~\ref{fig:exceptions}(a),  NC-PR corresponds to an action that makes an object, that is originally not constrained at all, to be suddenly constrained from two directions with the environment. For example, a drawer, which is originally fully pulled out, is later pushed in.

For the rotation case, NR-OS, NR-SP, NR-RT, and NR-RV are scenarios wherein the rotation axis, which is originally not constrained at all, is suddenly constrained because of new surface contacts. As shown in the example in Figure~\ref{fig:exceptions}(b), SP-RV, RT-SP,  and RT-RV have similar characteristics.  

These cases are singular cases and were probably not 
found in the YouTube demonstration videos as the videos aim to show standard procedures for the general audience.
In other words, the found task groups are a sufficient set for representing most of the standard household actions involving semantic constraints except for a few actions that may involve singular case state transitions.

\begin{figure*}[h]
    \centering
    \includegraphics[scale=0.8]{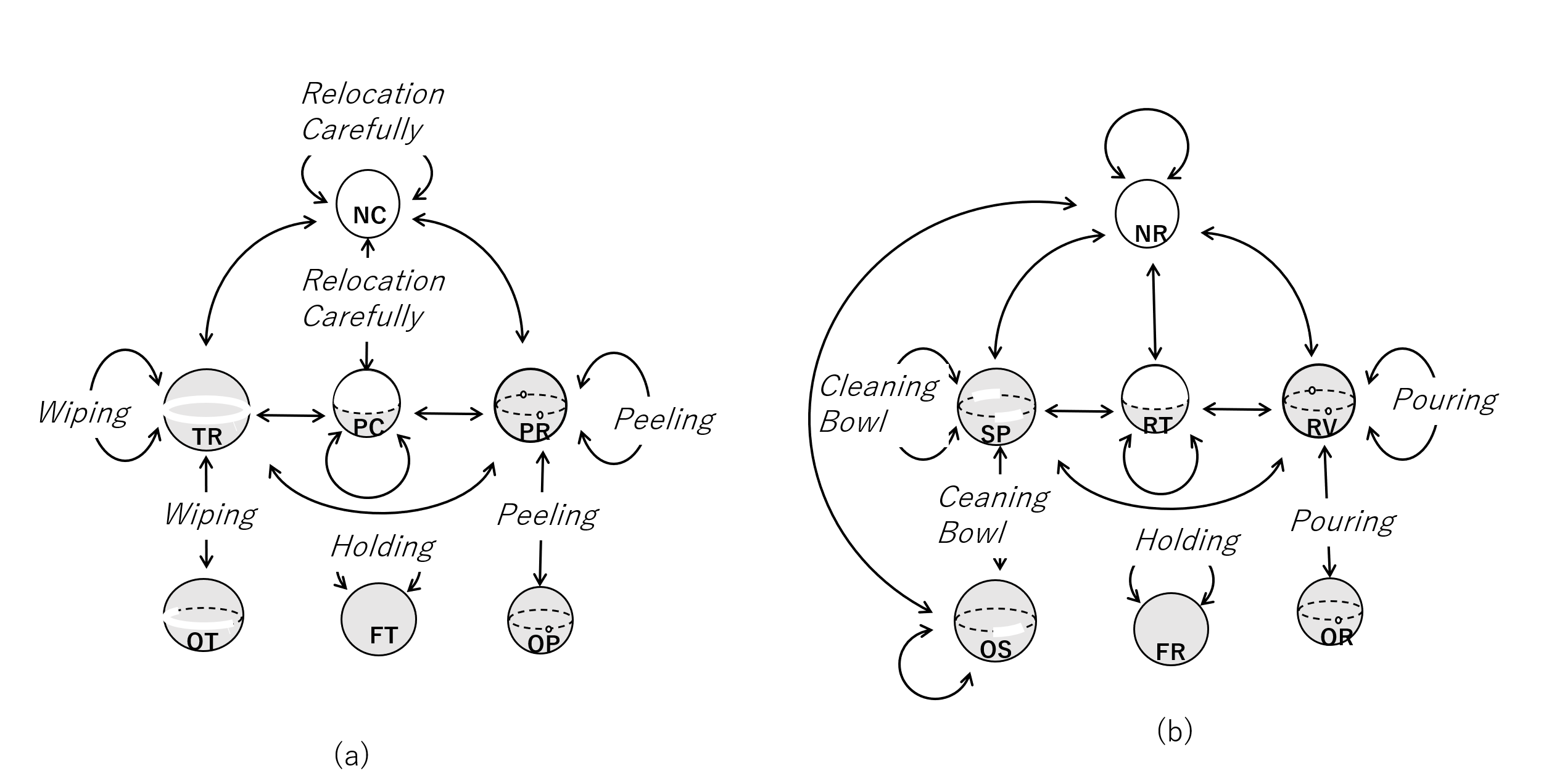}
    \caption{State transition graphs and corresponding household actions (Semantic class only). (a) Translation. (b) Rotation.}
    \label{fig:state}
\end{figure*}

\begin{figure*}
    \centering
    \includegraphics[scale=1.0]{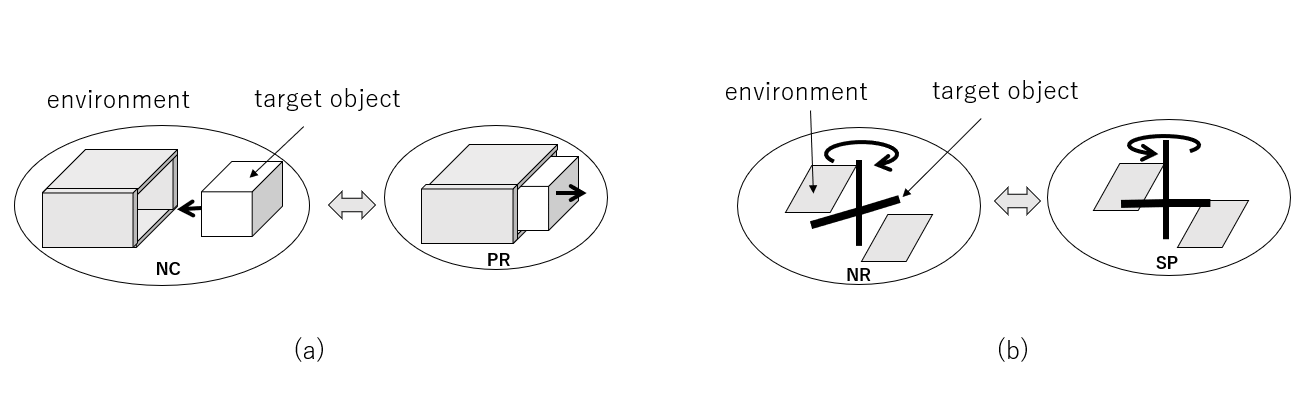}
    \caption{Some cases not extracted from household action videos. (a) {\bf NC-PR} translation transition, (b) {\bf NR-SP} rotation transition.}
    \label{fig:exceptions}
\end{figure*}

\section{Task group recognition from textual instructions}

In the previous sections, we found
a set of task groups that consider physical and semantic constraints and that is capable of representing a sufficient set of household actions. To apply these task groups to robot teaching, a system needs to recognize 
the task groups from human demonstration.

Although the mainstream research on robot teaching is based on visual information such as hand trajectories by \cite{Billard2008,Argall2009a}, visual information is insufficient to decide whether a demonstrated action involves semantic constraints. Some research suggests that verbs and objects are associated with nontrivial information about how objects should be manipulated in terms of motion trajectory or grasp type as in works by \cite{Paulius2019,Wake2020grasp}.
Inspired by these studies, we hypothesize that semantic constraints can be recognized from linguistic information such as verbal or textual instructions and evaluate its performance.

Thus, if the hypothesis is correct, we should also be able to identify the task group (which is a transition of zero or more semantic constraints) by providing verbal or textual instructions during a human demonstration. To test whether this is true, we first collect a dataset of textual instructions, then train a classification model using the dataset, and lastly evaluate the model performance using cross-validation.

\subsection{Textual instructions dataset}

For the dataset,
we focused on basic actions required for room cleaning as according to \cite{Smarr2014}, room cleaning is a household task that the elderlies would like a robot to do for them. To define basic actions, we referred to a list of services offered by a Japanese cleaning agency\footnote{Duskin https://www.duskin.jp/servicemaster/} 
and we selected 20 representative actions.

To collect textual instructions, we first recorded videos wherein one of the authors performed the representative actions. Then, we used a crowd-sourcing service (the Amazon Mechanical Turk) to collect data related to textual instructions by showing the recorded videos.

\subsubsection{Recorded videos}

\figref{sample-video} shows a sample scene in the action video, ``Vacuuming a floor,'' and  Table~\ref{tbl:set-of-actions} contains the 20 titles of our action videos and corresponding task groups. Here, for a bimodal pair of actions, such as pick/place, we only created one video, which corresponds to one action such as only the pick, for simplicity. 

Since differences in video perspectives can affect the content of verbalization, we recorded the first- and third-person perspectives for each action. We ensured that unnecessary information such as unhandled objects are not included in the videos. In total, 40 videos were recorded.

\begin{figure}[h]
    \centering
    \begin{tabular}{c}
    \begin{minipage}{0.5\hsize}
    \centering
    \includegraphics[width=0.9\linewidth]{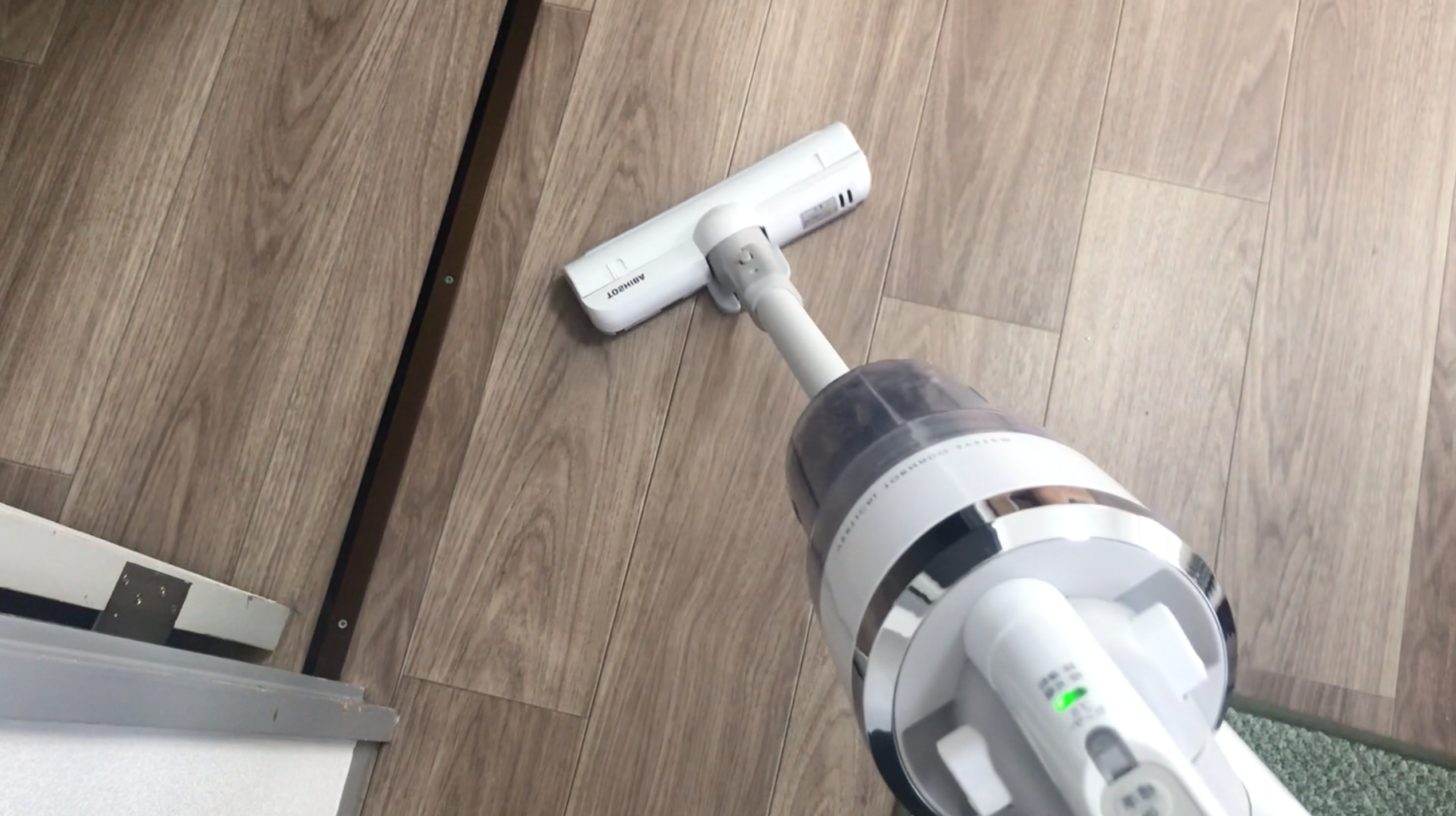}\\
    (a) 
    \end{minipage}
    
    \begin{minipage}{0.5\hsize}
    \centering
    \includegraphics[width=0.9\linewidth]{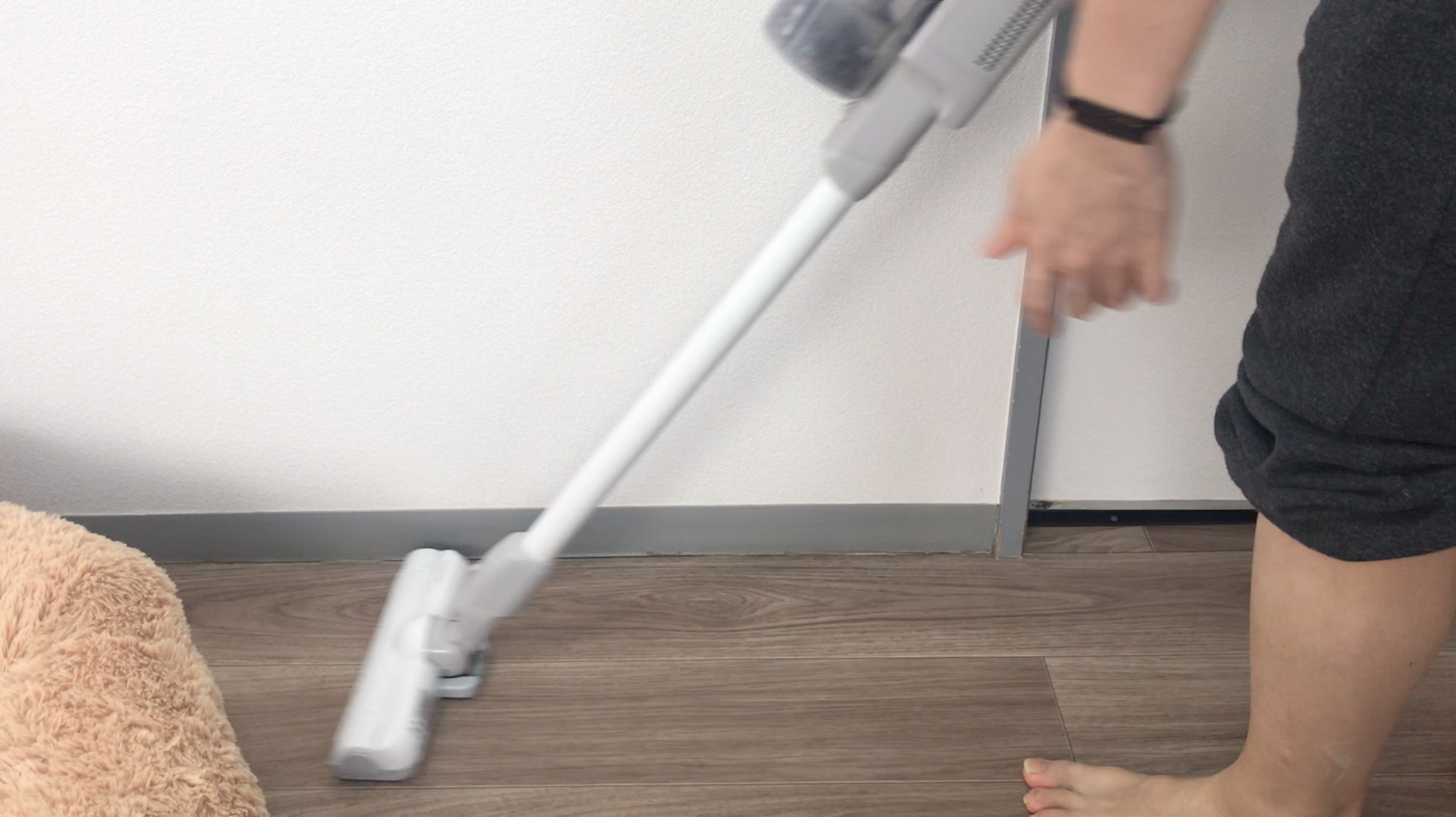} \\
    (b)
    \end{minipage}
    \end{tabular}
    \caption{An action video for \textit{Vacuuming a floor}. (a) First person view. (b) Third person view. }
    \label{fig:sample-video}
\end{figure}

\begin{table}[h]
    \centering
    \caption{Action video titles and corresponding task groups.}
    \begin{tabular}{|l|l|}
        \hline
        Action video & Group \\ \hline
        Picking up a pen from a place. & \multirow{2}{*}{PTG1} \\ \cline{1-1}
        Picking up a small piece of garbage from a floor. & \\ \hline
        Sliding a closed drawer to open. & PTG3 \\ \hline
        Opening a closed fridge door. & \multirow{4}{*}{PTG5}  \\ \cline{1-1}
         Turning a key stuck in a keyhole. & \\ \cline{1-1}
        Rotating a closed door to open. & \\ \cline{1-1}
        Turning a closed faucet. & \\ \hline
        Lifting the cup with its contents. & STG1 \\ \hline
        Clothing the floor with a rag. &  \multirow{6}{*}{STG2} \\ \cline{1-1}
        Moping a floor. & \\ \cline{1-1}
        Spraying on a floor. & \\ \cline{1-1}
        Vacuuming a floor. & \\ \cline{1-1}
        Cleaning a  whiteboard. & \\ \cline{1-1}
        Wiping down a window. & \\ \hline
         Pressing a switch to turn on the lights. & \multirow{4}{*}{STG3} \\ \cline{1-1}
        Opening a curtain. & \\ \cline{1-1}
        Taking clothes from a washing machine. & \\ \cline{1-1}
        Putting a piece of garbage in a trash bin. & \\ \hline
        Hanging a hanger. & \multirow{2}{*}{STG5} \\ \cline{1-1}
       Hanging a hat. & \\ \hline
        \end{tabular}
    \label{tbl:set-of-actions}
\end{table}

\subsubsection{Collecting textual instructions using a crowd-sourcing service}

Prior to data collection, we randomly divided the 20 basic actions into four groups, each of which contained five different actions. Two video sets were then created for each group by randomly selecting one video from the two perspectives for each action. In total, eight video sets were prepared. Ten workers were invited to provide textual instructions for each video set, i.e., by watching five videos. After every video, the workers were prompted to ``describe the task as if you are trying to teach it to somebody else.'' In addition, we asked the workers to provide exactly three different sentences for each video using as diverse vocabularies as possible.
The task duration was set to 20 minutes, and we found all workers completed the task within the time limit. However, a few workers did not provide appropriate answers, for example, filling out all forms with ``no'' or meaningless words, such as ``NA.''
One author manually removed those workers and assigned another worker until ten workers were accepted. 
Several examples of the collected sentences are shown in \tabref{amt-examples}.

\begin{table*}[htb]
    \centering
    \caption{Examples of the collected descriptions for \textit{Vacuuming a floor} using a crowd-sourcing service.}
    \begin{tabular}{|l|l|}
        \hline
         Under first-person view & Under third-person view \\ \hline
            ``Carefully, maneuver the vacuum & ``Someone clean the floor using \\ 
            along the surface of the floor.'' & vacuum cleaner.'' \\ 
            ``This is how you clean the floor &  ``Vacuum the floor frontwards  \\ 
            with a vacuum cleaner.'' & and backwards.'' \\ 
            ``Push the vacuum slowly over & ``Use you  hands and push the \\ 
            all areas of the floor and turn & vacuum in a straight line to \\ 
            it off when finished.'' &  clean the floor.''\\ \hline
        \end{tabular}
    \label{tbl:amt-examples}
\end{table*}

\subsection{Building a classification model of task groups}
We trained a classification model that considers a sentence as an input and outputs the task group of an action corresponding to the sentence.

First, each sentence was vectorized using \cite{mikolov2013distributed}'s word2vec. After lowercasing all the letters, a vector representing a sentence, ${ u }_{ sent.}$, was calculated using the following equation:
\begin{eqnarray}
{ u }_{ sent. } & = & \frac { 1 }{ N } \sum _{ { w }_{ i }\in W }{ { v }_{ i } }, 
\label{eq:sentence-vectorization}
\end{eqnarray}
where ${ N }$, ${ w }_{ i }$, ${ W }$, and ${ v }_{ i }$ represents the number of words in a sentence, ${ i }$-th word from the beginning of the sentence, the set of words that constitute the sentence, and vector representation of ${ w }_{ i }$, respectively. In this study, we used a third-party pretrained word2vec model for vectorization\footnote{https://code.google.com/archive/p/word2vec/}. Words that were not supported by the model were ignored.

The random forests (RF) by \cite{breiman2001random} classifier was used to classify the vectorized sentences . The RF method, which is a type of ensemble learning, can approximate any decision boundary regardless of the linearity of the boundary as explained by \cite{strobl2009introduction}. We used an RF method because we aimed to focus on the classifiability of the sentences into task groups, without assuming the shape of the classification function. The performance of the model was quantified by the performance of cross-validation and the classification accuracy of each action group.

\subsection{Results and discussions}
\figref{confusion} shows the confusion matrix of the prediction where 20\% of the dataset was used for the testing. The result suggests that the model can classify task groups, regardless of PTG and STG, from textual instructions. We conducted five-fold cross-validation 100 times (\figref{validation}) to validate the result. The performance of each cross-validation was calculated as the average accuracy of the five validations. The mean performance across the 100 trials was 77\% (0.92\% standard deviation), which indicated that the model stably classified about 80\% of the sentences in the dataset correctly. These results suggest that 
the task groups
can be recognized from linguistic instructions. 

\begin{figure}
    \centering
    \includegraphics[width=\columnwidth]{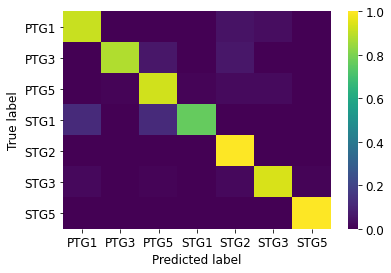}
    \caption{Confusion matrix for recognizing task groups from verbal inputs. Values are normalized by the number of tested samples for each true class.}
    \label{fig:confusion}
\end{figure}

\begin{figure}
    \centering
    \includegraphics[width=\columnwidth]{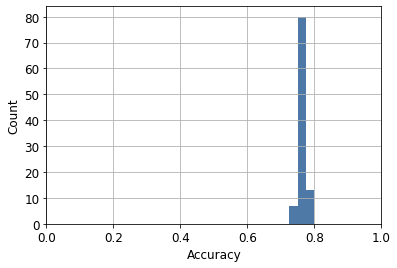}
    \caption{Histogram of the accuracy of five-fold cross validation conducted 100 times.}
    \label{fig:validation}
\end{figure}

This section investigated the possibility of recognizing 
task groups
from human demonstrations. Based on the hypothesis that semantic constraints can be recognized from linguistic information, we created a classification model to estimate task groups from textual instructions. As hypothesized, the model could estimate task groups that include both physical and semantic classes with an accuracy of approximately 80\%. Although the size of the dataset we collected was relatively small, we believe that these results reveal the potential for recognizing
task groups
from human demonstration by analyzing linguistic information with an appropriate dataset and model.

\section{Conclusion}

This paper describes the common sense required to perform household actions by introducing semantic constraints.
Analysis using real home cooking recordings and YouTube videos show that the semantic constraints are capable and appropriate for representing household actions.

An issue with semantic constraints is that they
are difficult, if not impossible, to obtain from observations of human demonstrations.
To demonstrate the possibility of 
recognizing task groups involving semantic constraints from verbal instructions, we conducted a small-scale experiment.
The preliminary experiments allows us to safely conclude that the instructions---even though they vary for each demonstrator---
help recognize the task groups in the house cleaning domain.

In the future, we plan to expand the domain covered, conduct large-scale verbal-task-group matching experiments, and implement these task groups on the learning-from-observation robot.

\section*{Acknowledgement}

The authors thank Prof. Esuko Saito of the home economics department of Ochanomizu University and Prof. Midori Otake of the home economics department of Tokyo Gaukugei University for helpful discussions on elderly supporting techniques. We appreciate their providing miso soap and beef stew making videos.

\bibliographystyle{unsrt}
\bibliography{sample2}

\appendix
\section*{Appendix}

\section{Pure finite rotation and its maintaining DOFs}

The analysis based on infinitesimal rotation has limitations of handling the constraint boundaries that satisfy equality in the screw equation. In particular, $\vec{S}$ satisfies:
\begin{eqnarray}
\vec{M} \cdot \vec{S} & = & 0, 
\label{eq:rotation-zero}
\end{eqnarray}
where $\vec{M}=(\vec{Q} \times \vec{N}) $.

\paragraph{Singular cases}
Before analyzing the general admissible directions of $\vec{S}$, we examine singular cases of the equation.

\begin{itemize}
    
\item $\vec{Q} = 0$. Namely, $\vec{P} - \vec{C}=0$ The contact point $\vec{P}$ is located at the rotation center, $\vec{C}$. Any finite rotations are possible because no finite displacement by the rotation occurs at point, $\vec{P}$.

\item $\vec{M} = 0$. Namely, $\vec{Q}$, the relative vector between the contact point and the rotation center is aligned with the direction as $\vec{N}$. An infinitesimal rotation generates an infinitesimal motion perpendicular to $\vec{N}$. However, a finite rotation generates a circular motion, and further analysis is required. When $\vec{Q} = \vec{N}$, any axis direction generates a displacement against the normal direction except for $\vec{S}=\vec{N}$, and all axes are in the prohibited direction. Further, when $\vec{Q}=-\vec{N}$, all axes directions are admissible.
\end{itemize}

\paragraph{General cases}

Let us consider the general case of one directional contact, that is, Eq.~\ref{eq:rotation-zero}. As shown in Fig.~\ref{fig:gaussian-sphere}(b), the cross-sectional unit circle between the Gaussian sphere and the X-Z plane is the admissible region under the assumption of an infinitesimal rotation. 
 
When the axis direction $\vec{S}$ is on the x–Z plane, the vector is perpendicular to $ \vec{M}$, and the equality holds. In this case, as shown in Fig.~\ref{fig:finite-rotation}, the resulting displacement given by an infinitesimal rotation around the axis is perpendicular to the normal at the point tangential to the local patch. 

However, a finite rotation generates a circular displacement, and the circle comes in contact with the contact point either at the top most point Q2 or the bottom most point Q1, as shown in Fig.~\ref{fig:finite-rotation}(a). If the contact occurs at the bottom most point Q1, a finite rotation generates an admissible detaching displacement from the contact point. If contact occurs at the top-most position, a finite rotation generates a collision displacement to the patch. 

The top and bottom most decisions can be made by examining whether the vector $\vec{S}$ is between the normal direction $\vec{N}$ and the positional vector$\vec{Q}$ using Theorem 1.

\begin{figure}
    \centering
    \includegraphics[width=\columnwidth]{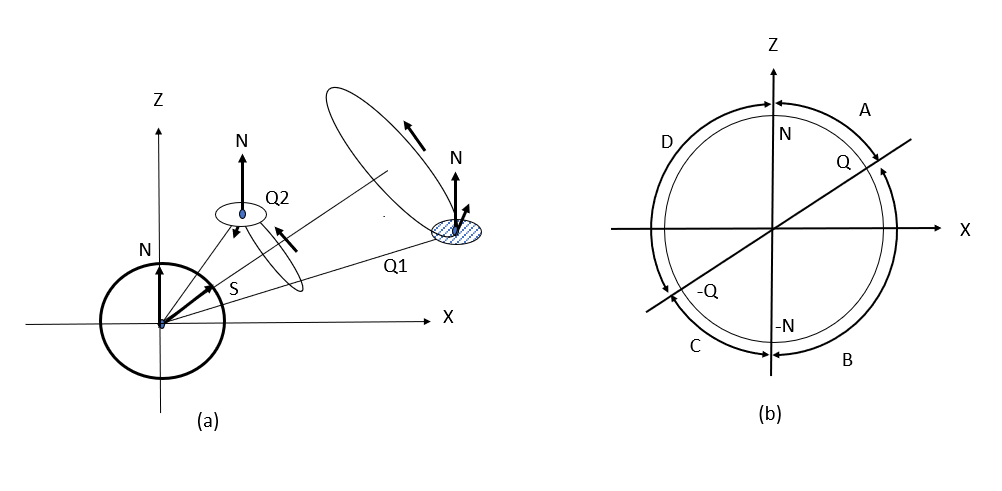}
    \caption{Finite rotation at the tangential points}
    \label{fig:finite-rotation}
\end{figure}

\begin{theorem}

On the cross-sectional circle, when $\vec{S}$ satisfies the condition,  $(\vec{Q} \times \vec{S}) \cdot (\vec{S} \times \vec{N} )  \geq 0 $, the axis direction is the admissible direction.

\end{theorem}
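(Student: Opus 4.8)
\emph{Approach.} The plan is to settle admissibility by a second-order kinematic analysis of the contact point $\vec{P}$ under a finite rotation. On the cross-sectional circle the quantity $\vec{M}\cdot\vec{S}$ vanishes, so the first-order (infinitesimal) displacement of $\vec{P}$ is tangent to the local patch and cannot decide anything; the decision is therefore made by the curvature (centripetal) term of the circular trajectory of $\vec{P}$. I would compute that term explicitly with Rodrigues' formula, project it onto the contact normal $\vec{N}$, and then recognise the resulting scalar as $-(\vec{Q}\times\vec{S})\cdot(\vec{S}\times\vec{N})$ via a Lagrange identity.

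\emph{Main steps.} First I would write, for a rotation of angle $\theta$ about the unit axis $\vec{S}$ through the centre $\vec{C}$,
\begin{equation}
\vec{Q}(\theta) - \vec{Q} = \sin\theta\,(\vec{S}\times\vec{Q}) + (1-\cos\theta)\,\bigl(\vec{S}\times(\vec{S}\times\vec{Q})\bigr),
\end{equation}
and use $\vec{S}\times(\vec{S}\times\vec{Q}) = (\vec{S}\cdot\vec{Q})\vec{S}-\vec{Q} = -\vec{Q}_{\perp}$, where $\vec{Q}_{\perp}=\vec{Q}-(\vec{Q}\cdot\vec{S})\vec{S}$ is the radius vector of the circle traced by $\vec{P}$. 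Dotting with $\vec{N}$ and using $\vec{N}\cdot(\vec{S}\times\vec{Q})=\vec{S}\cdot(\vec{Q}\times\vec{N})=\vec{S}\cdot\vec{M}$ gives
\begin{equation}
\vec{N}\cdot(\vec{Q}(\theta)-\vec{Q}) = \sin\theta\,(\vec{S}\cdot\vec{M}) - (1-\cos\theta)\,(\vec{N}\cdot\vec{Q}_{\perp}).
\end{equation}
Since $\vec{S}$ lies on the cross-sectional circle, $\vec{S}\cdot\vec{M}=0$; as $1-\cos\theta\ge 0$, the sign of the normal displacement for all small $\theta\ne 0$ is the sign of $-(\vec{N}\cdot\vec{Q}_{\perp})$. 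Within the planar-patch approximation of Section 3 this is exactly the statement that the finite rotation is admissible (it does not push $\vec{P}$ into the patch) if and only if $\vec{N}\cdot\vec{Q}_{\perp}\le 0$.

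\emph{Closing the identity.} It then remains to rewrite this in the stated form. By the Lagrange identity and $|\vec{S}|=1$,
\begin{equation}
(\vec{Q}\times\vec{S})\cdot(\vec{S}\times\vec{N}) = (\vec{Q}\cdot\vec{S})(\vec{S}\cdot\vec{N}) - (\vec{Q}\cdot\vec{N}) = -\,\vec{N}\cdot\vec{Q}_{\perp},
\end{equation}
so $(\vec{Q}\times\vec{S})\cdot(\vec{S}\times\vec{N})\ge 0$ is precisely $\vec{N}\cdot\vec{Q}_{\perp}\le 0$, i.e.\ admissibility. I would also remark that both expressions are unchanged under $\vec{S}\mapsto -\vec{S}$, consistent with the great-circle boundary degenerating into arcs and antipodal point pairs rather than open regions, and that the sign condition says $\vec{S}$ lies in the angular wedge between $\vec{N}$ and $\vec{Q}$, which is the ``bottom-most'' tangency Q1 of Fig.~\ref{fig:finite-rotation} as opposed to the colliding ``top-most'' tangency Q2.

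\emph{Expected obstacle.} The delicate point is the equality locus $\vec{N}\cdot\vec{Q}_{\perp}=0$, where even the curvature term is tangent to the patch and the second-order test is silent; these are the configurations in which $\vec{S}$ equals $\pm\vec{N}$ or lies along $\vec{Q}$, together with the degenerate data $\vec{Q}=0$ and $\vec{M}=0$ handled separately above, and they are exactly the endpoints at which an admissible arc terminates. Related care is needed in pinning down what ``admissible for a finite rotation'' means: I only assert non-penetration for $\theta$ in a neighbourhood of $0$ on the scale where the patch is faithfully represented by its tangent plane, which is all the state classification uses, and for such $\theta$ the $(1-\cos\theta)$ term genuinely dominates once $\sin\theta\,(\vec{S}\cdot\vec{M})$ has been annihilated. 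Finally, the orientation convention for $\vec{N}$ (pointing to the free side of the patch, as fixed in the one-point translation discussion) must be carried consistently so that the final inequality reads ``$\ge 0$'' rather than ``$\le 0$''.
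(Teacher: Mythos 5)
Your proposal is correct, and it reaches the theorem by a genuinely different route than the paper. The paper's proof is a geometric case analysis: it partitions the boundary circle into the four segments $A,B,C,D$ cut out by $\pm\vec{N}$ and $\pm\vec{Q}$, reads off from Fig.~\ref{fig:finite-rotation} that segments $A$ and $C$ give the bottom-most (detaching) tangency while $B$ and $D$ give the top-most (colliding) one, checks the sign of $(\vec{Q}\times\vec{S})\cdot(\vec{S}\times\vec{N})$ segment by segment, and then disposes of the four endpoint directions $\pm\vec{N},\pm\vec{Q}$ separately. You instead compute the normal displacement exactly with Rodrigues' formula, observe that the $\sin\theta\,(\vec{S}\cdot\vec{M})$ term is annihilated identically on the cross-sectional circle, so that $\vec{N}\cdot(\vec{Q}(\theta)-\vec{Q})=-(1-\cos\theta)\,\vec{N}\cdot\vec{Q}_{\perp}$, and then identify $-\vec{N}\cdot\vec{Q}_{\perp}$ with the stated triple product via the Lagrange identity; your algebra checks out at every step. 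What your route buys is considerable: it proves the biconditional in one computation rather than verifying consistency over four wedges, it does not lean on a figure for the ``bottom-most versus top-most'' claims, it makes the invariance under $\vec{S}\mapsto-\vec{S}$ and the arc-endpoint structure visible in the formula, and---since Rodrigues' formula is exact---it shows that at $\vec{S}=\pm\vec{N}$ and $\vec{S}=\pm\vec{Q}/|\vec{Q}|$ the normal displacement vanishes identically (purely tangential motion, or no motion at all), which is exactly the paper's separate endpoint discussion recovered as the equality locus of your inequality. What the paper's version buys is the direct visual link to the $Q_1/Q_2$ tangency picture that the surrounding appendix reuses for the two-contact case. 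Your stated caveats (planar-patch approximation, orientation convention for $\vec{N}$) are the right ones and match the assumptions fixed in Section~3.
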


\begin{proof}

We divide the circle into four sections, $A, B, C, and D$ by the points $\vec{N}, -\vec{N}, \vec{Q}, -\vec{Q}$, as shown in Fig.~\ref{fig:finite-rotation}(b). From the relation between the axis and the point, an axis in segments A and C provides the bottom-most contact, while segments B and D provide the top most contact.

In the $A$ segment, a $\vec{S}$ becomes an admissible axis because the rotation circle is tangential at the bottom-most point. In this segment, $(\vec{Q}\times\vec{S}) \cdot (\vec{S} \times \vec{N}) > 0$ holds. In the $B$ segment, a $\vec{S}$ becomes a prohibited axis because the circle is tangential at the top most point. In this segment, 
$(\vec{Q} \times \vec{S}) \cdot (\vec{S} \times \vec{N}) < 0$.
In the $C$ segment, $\vec{S}$ becomes an admissible axis. In this segment,
$(-\vec{Q} \times \vec{S}) \cdot (\vec{S} \times -\vec{N} ) \geq 0$; that is, $(\vec{Q} \times \vec{S}) \cdot (\vec{S} \times \vec{N}) > 0$.
In the $D$ segment, $\vec{S}$ is a prohibited axis. $(-\vec{Q} \times \vec{S}) \cdot (\vec{S} \times \vec{N}) \geq 0$, namely $(\vec{Q} \times \vec{S}) \cdot (\vec{S} \times \vec{N}) < 0$.

The contradiction occurs when a point in segment A or segment C satisfies $ (\vec{Q} \times \vec{S}) \cdot (\vec{S} \times \vec{N}) < 0$. Similarly, when a point in segment C or segment D satisfies $(\vec{Q} \times \vec{S}) \cdot (\vec{S} \times \vec{N}) > 0 $. Therefore, it is necessary for a point in segments A or C to satisfy the condition $ (\vec{Q} \times \vec{S}) \cdot (\vec{S} \times \vec{N}) > 0$ .

When $\vec{S}=\vec{N}$ or $\vec{S}= -\vec{N}$, not only an infinitesimal rotation but also finite rotation provides a displacement along the tangential direction. Thus, this axis direction is included in the admissible direction.

When $\vec{S}=\vec{Q}$ or $\vec{S}=-\vec{Q}$, an infinitesimal rotation and a finite rotation do not generate any displacement at the contact point. Thus, this axis direction is included in the admissible direction.
\end{proof}

\paragraph{One-directional contact}
Let us re-visit one--directional contact on the Gaussian sphere, where the arc between $N$ and $Q$ and the one between $-N$ and $-Q$ is the admissible direction, as shown in Fig.~\ref{fig:admissible-region-one}(a). 
Under the infinitesimal rotation analysis, these arcs represent the maintain directions. In the finite rotation analysis, most arc points become detaching directions. Only four points, $\vec{S} = \vec{N}$, $\vec{S}= -\vec{N}$, $\vec{S} = \vec{Q}$, and $\vec{S} = - \vec{Q}$, provide maintain rotations.
By adding those inner admissible regions, Fig.~\ref{fig:admissible-region-one}(b) depicts admissible directions under one directional contact. 

\begin{figure}
    \centering
    \includegraphics[width=\columnwidth]{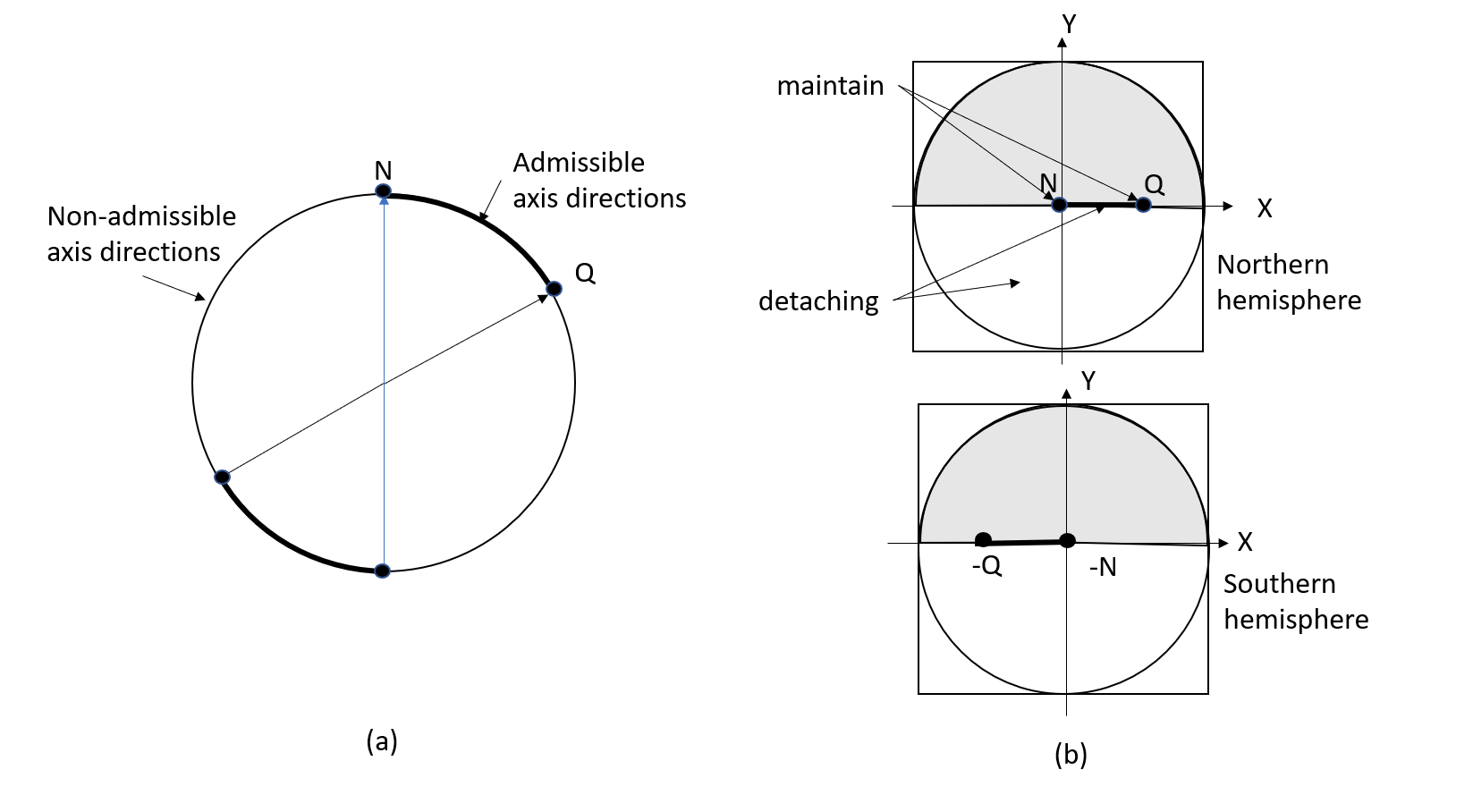}
    \caption{Admissible region of one-directional contact. (a) Admissible directions on the circle. (b) Admissible directions plotted on projected planes. }
    \label{fig:admissible-region-one}
\end{figure}

\paragraph{Two-directional contact (General case)}

Two-directional contact holds two constraint inequality.
\begin{eqnarray}
\vec{M}_1 \cdot \vec{S} & \geq & 0 \nonumber \\
\vec{M}_2 \cdot \vec{S} & \geq & 0
\end{eqnarray}
where $\vec{M}_1  = (\vec{Q}_1 \times \vec{N}_1)$ and $\vec{M}_2  =  (\vec{Q}_2 \times \vec{N}_2)$.
The admissible region on the Gaussian sphere is a crescent region.  

\paragraph{Two-directional contact (Singular case)}

The singular case occurs when $\vec{M}_1 = - \vec{M}_2$. In the infinitesimal rotation, the admissible region is a greater circle perpendicular to $\vec{M}_1$ and $\vec{M}_2$ as shown in Fig.~\ref{fig:two-circle}(a).

The  finite rotation only allows arcs between $\vec{N}_1$ and $\vec{Q}_1$ and between $\vec{N}_2$ and $\vec{Q}_2$ on the greater circle. The admissible arcs, resulting from these two arcs, appear in various patterns depending on the relative relations among those points. For example, in Fig.~\ref{fig:two-circle}(a), admissible regions appear as a pair of arcs, while in Fig.~\ref{fig:two-circle}(c)]. Theorem 2 states that they appear as five patterns: no regions, a pair of points, two pairs of points, a pair of arcs, and two pairs of arcs.

\begin{figure}
    \centering
    \includegraphics[width=\columnwidth]{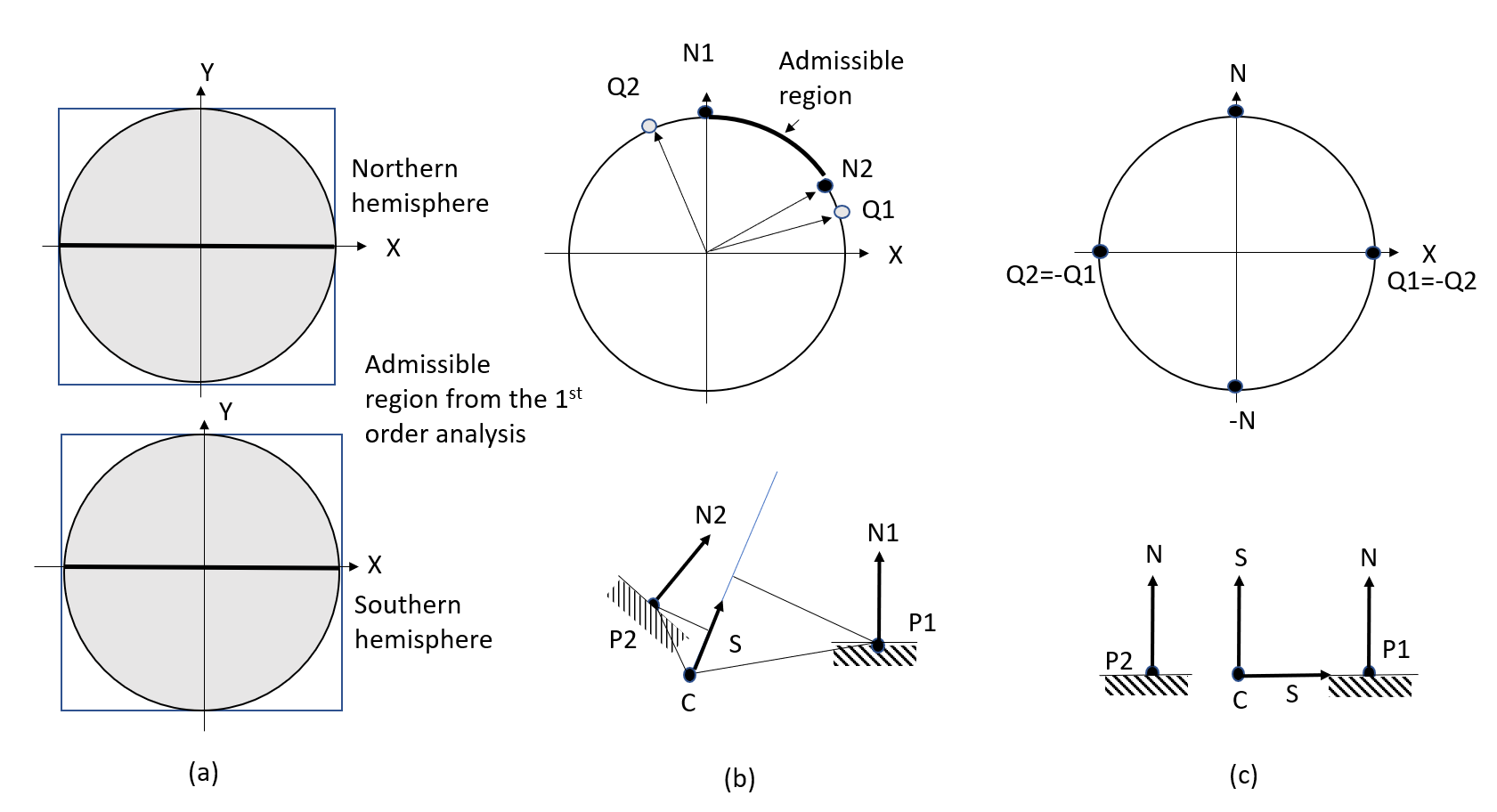}
    \caption{Two-directional contact caused by two opposite constraint vectors, $\vec{M}$ and $-\vec{M}$. (a) First-order analysis. The entire greater circle perpendicular to $\vec{M}$ becomes the admissible region. (b) Second-order analysis of arbitrary positions of contact points. (c) Second-order analysis of the contact points, which have the same surface normal on the same plane, which includes the rotation center. }
    \label{fig:two-circle}
\end{figure}

\begin{theorem}
Two-directional contacts with having $\vec{M}_1=-\vec{M}_2$ appears on the great circle as no region, a pair of points, two pairs of points, a pair of arcs and two pairs of arcs.
\end{theorem}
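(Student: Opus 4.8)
The plan is to reduce the statement to an elementary combinatorial fact about intersections of circular arcs, leveraging the finite‑rotation analysis already established. By Theorem~1 and the preceding one‑directional‑contact discussion, on the great circle perpendicular to $\vec{M}_1 = -\vec{M}_2$ the admissible directions coming from the first contact form the set $A_1$ consisting of the closed minor arc joining $\vec{N}_1$ to $\vec{Q}_1$ together with its antipodal arc (joining $-\vec{N}_1$ to $-\vec{Q}_1$); likewise the second contact contributes $A_2$. Two structural facts will be used repeatedly: (i) each $A_i$ is centrally symmetric, i.e. invariant under $\vec{S}\mapsto-\vec{S}$; and (ii) away from the singular configurations already treated separately (where $\vec{Q}_i=\pm\vec{N}_i$), each of the two arcs making up $A_i$ has angular length strictly less than $\pi$, so the two arcs of $A_i$ are disjoint and $A_i\neq S^1$. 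The admissible region for the two‑directional contact is exactly $A_1\cap A_2$, and the whole task is to classify what this intersection can look like.

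First I would record the combinatorial lemma that drives everything: two closed arcs of a circle whose angular lengths sum to less than $2\pi$ meet in a connected set (an arc, a single point, or the empty set), since a disconnected intersection would force the two arcs to cover the entire circle. Applying this with one arc drawn from $A_1$ and one from $A_2$ — each of length $<\pi$, so the hypothesis is met — shows that a single arc of $A_1$ meets each of the two arcs of $A_2$ in at most one connected piece. Hence, fixing one of the two arcs $A_1'$ of $A_1$, the set $A_1'\cap A_2$ has at most two connected components, each of which is either an arc or a single point. By central symmetry $A_1\cap A_2=(A_1'\cap A_2)\cup -(A_1'\cap A_2)$, so the full admissible region is this small configuration superimposed with its antipodal copy.

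The classification is then a finite case split on the number and type of the components of $A_1'\cap A_2$: empty gives \emph{no region}; a single point gives \emph{a pair of points}; two points give \emph{two pairs of points}; a single arc gives \emph{a pair of arcs}; and two arcs give \emph{two pairs of arcs}. The boundary configurations — tangency at an arc endpoint, or the situation where the two contacts share a distinguished direction such as a common surface normal (the case drawn in Fig.~\ref{fig:two-circle}(c)) — are precisely the transitions between these types and are handled either by perturbing the contact geometry or by reading them as limiting cases of the list above. The step I expect to be the main obstacle is exactly this endpoint bookkeeping: one must verify that every way $A_1'$ can straddle a gap of $A_2$, or abut an arc of $A_2$ at an endpoint, falls into one of the five listed shapes (possibly as a degenerate superposition of two of them), and that no genuinely new sixth shape emerges. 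The difficulty is combinatorial exhaustiveness of the endpoint‑coincidence subcases, not any hard spherical geometry.
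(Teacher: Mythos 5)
Your reduction is a genuinely different route from the paper's. The paper fixes $\vec{N}_1$ at the pole, reuses the four segments $A,B,C,D$ of Theorem~1, and enumerates by figure which segments $\vec{N}_2$ and $\vec{Q}_2$ may occupy (the constraint $\vec{M}_2=-\vec{M}_1$ forbids, e.g., $\vec{Q}_2\in B$ when $\vec{N}_2\in A$), with the degenerate placements relegated to a separate figure of ``singular cases.'' You instead identify the admissible set of each contact as a centrally symmetric pair of closed arcs, each of length $<\pi$, and intersect them using the correct lemma that two closed arcs of total length $<2\pi$ meet in a connected set. That is cleaner and, unlike the paper's figure enumeration, makes the case analysis checkable. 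Up to the final classification step your argument is sound.

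The gap is exactly where you flagged it, and it cannot be closed the way you hope: the component-type enumeration of $A_1'\cap A_2$ has \emph{six} outcomes, not five, because one of the two connected pieces can be an isolated point while the other is a nondegenerate arc. This mixed type is realizable under $\vec{M}_1=-\vec{M}_2$. Parametrize the great circle by angle and take $\vec{N}_1$ at $0$, $\hat{Q}_1$ at $\pi/2$, $\vec{N}_2$ at $5\pi/4$, $\hat{Q}_2$ at $\pi/2$ with $|\vec{Q}_2|=\sqrt{2}\,|\vec{Q}_1|$ (so that $\vec{Q}_2\times\vec{N}_2=-\vec{Q}_1\times\vec{N}_1$ exactly). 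Then $A_1=[0,\pi/2]\cup[\pi,3\pi/2]$ and $A_2=[\pi/2,5\pi/4]\cup\bigl([3\pi/2,2\pi]\cup[0,\pi/4]\bigr)$, so
$A_1\cap A_2=[0,\pi/4]\cup\{\pi/2\}\cup[\pi,5\pi/4]\cup\{3\pi/2\}$:
a pair of arcs \emph{together with} a pair of isolated points, a sixth pattern absent from the theorem's list. The isolated point $\pi/2=\hat{Q}_1=\hat{Q}_2$ is genuinely admissible, since both contact points lie on that axis and rotation about it is a maintaining rotation, while its neighborhood on the segment-$B$ side is forbidden by contact~1. So your deferred ``endpoint bookkeeping'' does not merely need checking; it needs a decision: either exclude the coincidences $\vec{N}_2,\hat{Q}_2\in\{\pm\vec{N}_1,\pm\hat{Q}_1\}$ as additional singular configurations (so the five patterns hold generically), or amend the statement to admit superpositions of the listed shapes. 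Note that the paper's own proof does not escape this either---its singular-case figure claims only ``a pair of points or two pairs of points''---so the defect you would need to repair is shared by the original argument rather than introduced by your approach.
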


\begin{proof}
We can set $\vec{N}_1$ as the north pole and $\vec{Q}_1$ on the X-Z plane of the X-positive side, without loss of gene rarity. From this definition, the vectors $\vec{N}_2$ and $\vec{Q}_2$ are on the same plane.

We can use the same labels as segments A, B, C, and D in Fig.~\ref{fig:finite-rotation}(b). We enumerate the following four cases.

Case 1: $\vec{N}_2$ is in segment A. As shown in Fig.~\ref{fig:case1}, $\vec{Q}_2$ can be in segments A, C, or D. This case creates a pair of arcs or two pairs of arcs.

\begin{figure}
    \centering
    \includegraphics[width=\columnwidth]{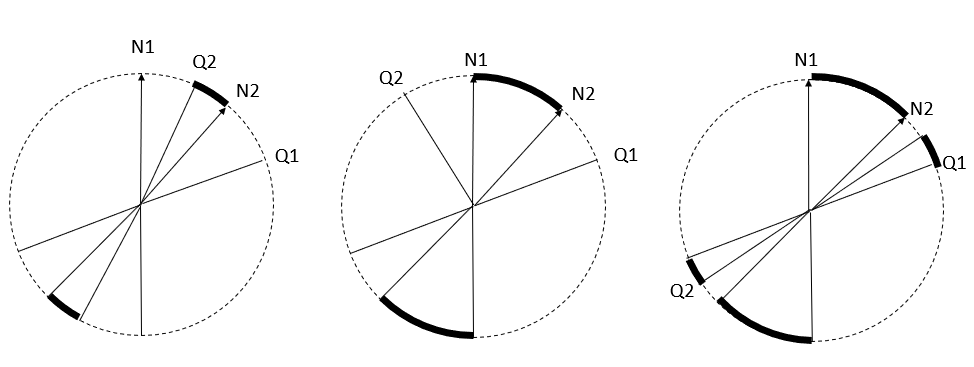}
    \caption{Case1. $\vec{N}$ is in segment A.}
    \label{fig:case1}
\end{figure}

Case 2: $\vec{N}_2$ is in segment B. As shown in Fig.~\ref{fig:case2},this case creates no regions or a pair of arcs.
\begin{figure}
    \centering
    \includegraphics[width=\columnwidth]{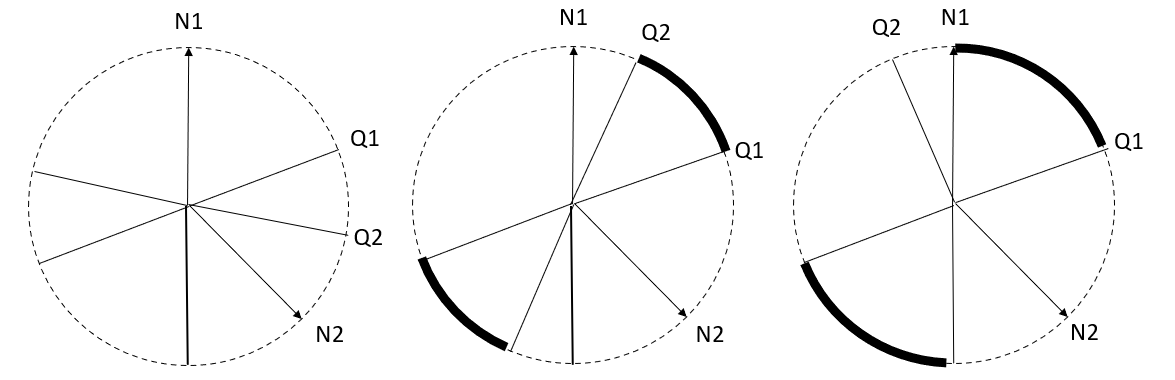}
    \caption{case2.}
    \label{fig:case2}
\end{figure}

Case 3: $\vec{N}_2$ is in segment C. As shown in Fig.~\ref{fig:case3}, this case creates a pair of arcs or two pairs of arcs.

\begin{figure}
    \centering
    \includegraphics[width=\columnwidth]{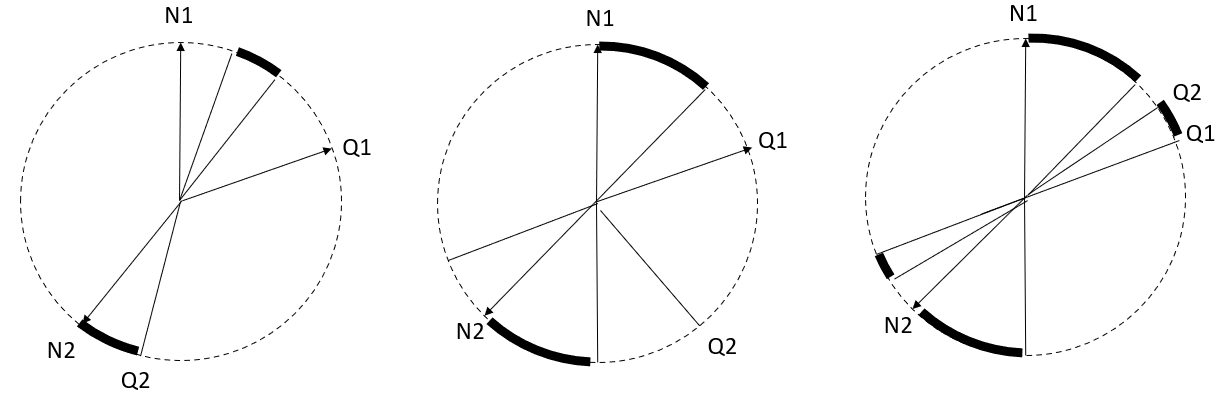}
    \caption{Case3.}
    \label{fig:case3}
\end{figure}

Case 4: $\vec{N}_2$ is in segment D. As shown in Fig.~\ref{fig:case4}, this case creates no regions or a pair of arcs.
\begin{figure}
    \centering
    \includegraphics[width=\columnwidth]{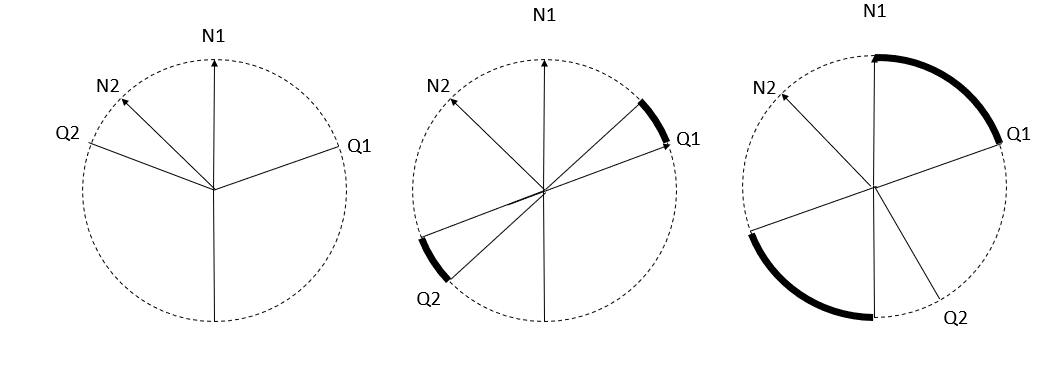}
    \caption{Case4.}
    \label{fig:case4}
\end{figure}

Singular cases occur as shown in Fig.~\ref{fig:case_sing}: two pairs of points or pair of points.

\begin{figure}
    \centering
    \includegraphics[width=\columnwidth]{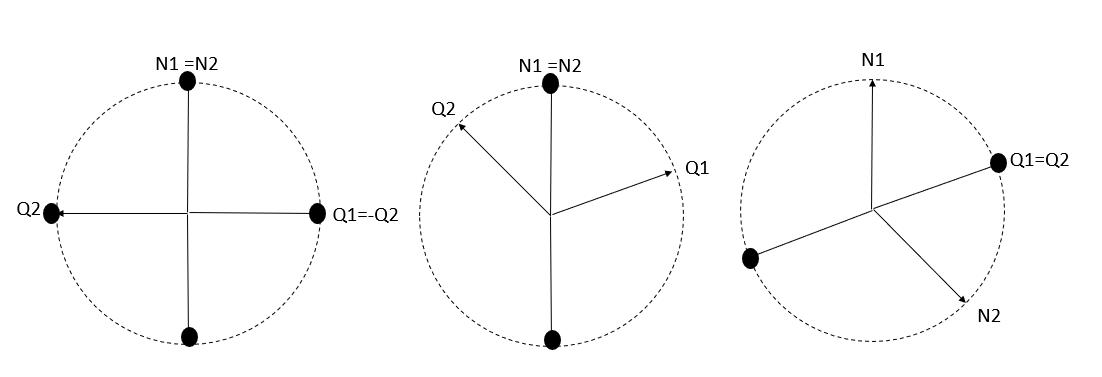}
    \caption{Singular cases}
    \label{fig:case_sing}
\end{figure}

Thus, the patterns are 1) no region, 2) a pair of points, 3) two pairs of points, 4) a pair of arcs and 5) two pairs of arcs.

\end{proof}

\paragraph{Multi-directional contact}

Multi-directional contact adds further contract inequality equations. Each equation provides a hemispherical constraint on a Gaussian sphere. Theorem 3 states that we will have 11 topological patterns in the resulting admissible regions.

\begin{theorem}
Under multi-directional contact, admissible regions form three patterns:
1) polygonal region
2) arc(s), point(s) or combinations of arc(s) and point(s) on a great circle
3) none
\end{theorem}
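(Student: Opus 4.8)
The plan is to reduce the finite-rotation problem to the first-order (infinitesimal) analysis together with Theorems~1 and~2 applied one contact at a time. First I would record the monotonicity fact that the set of finite-admissible axes is contained in the set of infinitesimal-admissible axes: if $\vec{M}_i\cdot\vec{S}<0$ for some contact $i$, then an infinitesimal rotation about $\vec{S}$ already drives $\vec{P}_i$ into its patch, hence so does the finite rotation, and therefore every finite-admissible $\vec{S}$ satisfies $\vec{M}_i\cdot\vec{S}\ge 0$ for all $i$. By the Kuhn--Tucker / rank argument used in Section~3 for the infinitesimal case, this common solution set $\{\vec{S}:\vec{M}_i\cdot\vec{S}\ge 0\}$ is a convex region of the Gaussian sphere whose type is controlled by the rank of $[\vec{M}_1\ \cdots\ \vec{M}_n]$: it either has nonempty two-dimensional interior, or it is confined to a single great circle, or it is empty. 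I would then split into these three alternatives.

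In the first alternative, choose any interior axis $\vec{S}_0$; then $\vec{M}_i\cdot\vec{S}_0>0$ for every $i$, so the finite rotation about $\vec{S}_0$ is a strictly separating motion at every contact and is therefore admissible, and the entire two-dimensional interior is finite-admissible. Its boundary is a finite union of arcs, each lying on a great circle $\vec{M}_i\cdot\vec{S}=0$ of some active contact; on each such arc I would invoke Theorem~1 to retain exactly the sub-arcs satisfying $(\vec{Q}_i\times\vec{S})\cdot(\vec{S}\times\vec{N}_i)\ge 0$, intersected with the remaining half-space constraints. The result is the two-dimensional interior together with finitely many boundary arcs and points, which is the polygonal region of the statement (the hemisphere, crescent and whole-sphere types being the degenerate cases with $0$, $1$ or $2$ edges).

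In the second alternative every admissible $\vec{S}$ lies on one great circle $G$. For each contact whose constraint vector $\vec{M}_i$ is orthogonal to the plane of $G$, the constraint holds with equality everywhere on $G$, and Theorem~1 trims $G$ to the arcs from $\vec{N}_i$ to $\vec{Q}_i$ and from $-\vec{N}_i$ to $-\vec{Q}_i$; for a contact whose $\vec{M}_i$ is not orthogonal to that plane, the constraint cuts $G$ into a semicircle whose endpoints are then checked by Theorem~1. Intersecting these per-contact restrictions over all $i$ is exactly the bookkeeping already carried out for two contacts in Theorem~2, and it yields a finite union of closed arcs and isolated points on $G$, possibly empty; this is pattern~2. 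The third alternative is immediate: an empty infinitesimal region forces an empty finite-admissible region by the monotonicity step. The singular configurations $\vec{Q}_i=0$ and $\vec{M}_i=0$ flagged before Theorem~1 are treated separately --- the former contributes no constraint, the latter removes all axes or all but $\pm\vec{N}_i$ --- and fold into the same three patterns.

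The step I expect to be the main obstacle is the boundary bookkeeping in the first alternative: one must verify that trimming each boundary great-circle arc by the quadratic condition of Theorem~1 and then re-intersecting with the other half-spaces cannot produce anything outside the polygonal type --- it cannot disconnect the two-dimensional region or leave a stranded lower-dimensional piece --- and that the arcs coming from different active contacts glue consistently at the polygon vertices. A secondary nuisance is confirming, in the second alternative, that intersecting many coplanar arc-restrictions still closes up into finitely many arcs and points rather than a more exotic subset of $G$; but this follows because each single constraint already contributes only finitely many arcs and points by Theorems~1 and~2, and a finite intersection of such sets is again of that form.
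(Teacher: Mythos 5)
Your proposal is correct in substance, but it takes a genuinely different route from the paper. The paper proves Theorem~3 by induction on the number of contacts: starting from the two-contact classification of Theorem~2, it adds one constraint inequality at a time, splits on whether the rank of the coefficient matrix increases, and checks in each sub-case (previous region polygonal, on a great circle, or null) that the new great circle cannot create a pattern outside the three listed types. You instead argue globally: first the containment of the finite-admissible set inside the infinitesimal one (the first-order penetration argument, which is sound since the normal component of the finite displacement is $\theta\,\vec{M}_i\cdot\vec{S}+O(\theta^2)$), then the Kuhn--Tucker trichotomy for the convex spherical region $\{\vec{S}:\vec{M}_i\cdot\vec{S}\ge 0\}$ (two-dimensional interior, confined to a great circle, or empty), and finally a single pass of finite-rotation refinement on the boundary via Theorems~1 and~2. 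What your approach buys is a cleaner structural statement --- the finite-admissible set is exactly the infinitesimal convex region with its boundary trimmed by the quadratic condition of Theorem~1 --- and it makes explicit where the second-order analysis actually enters, which the paper's inductive step largely leaves implicit. What the induction buys is that the boundary bookkeeping you flag as the main obstacle is only ever done against one new great circle at a time, so the "gluing at polygon vertices" concern never arises in that form; the closure argument you give for the great-circle case (a finite intersection of finite unions of closed arcs and points is again of that form) is the right way to discharge your secondary concern and is, if anything, more careful than the corresponding step in the paper. Both arguments land on the same three patterns, and neither tracks precisely which boundary points of the polygonal region survive, so the two proofs are at a comparable level of rigor.
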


\begin{proof}

We will prove this by using the induction method.

Under two-directional contact, from Theorem 2, the admissible regions from one of the three patterns:
\begin{itemize}
    \item A polygonal region
    \item Arc(s), point(s) or a combination of these on one great circle
    \item Null region.
\end{itemize}

Under $n-1$ directional contact, let us assume the rank of the coefficient matrix of $N-1$ constraint inequalities as $M-1$. Further, let us assume that the admissible regions form one of the three patterns. Under this assumption, when we add an extra constraint, we will prove that admissible regions form one of the three patterns.

When we add extra inequality, the rank of the coefficient matrix is either $M$ or $M-1$. 

{\it (Case 1: New rank is $M$)}
First, we consider the case when the rank becomes $M$. In this case, the new great circle intersects the previous $M-1$ great circle.

{\it (Case 1-1: Polygonal region}
If the previous admissible region forms a polygonal region, this great circle either intersects this polygonal region or not. When it intersects the region, a new edge appears in this region, which results in a new polygonal region. If not, the original polygonal region remains or disappears; the resulting admissible region is either a polygonal region or none. In any case, the new constraint does not generate any new pattern. The admissible regions form one of the three patterns.

When the previous admissible region forms arc(s), point(s), or a combination on a great circle, the new great circle intersects this great circle. Owing to the inequality of the new great circle, some point(s) and some arc(s) may disappear or some arc(s) may become shorter or one arc becomes a point. In any case, the resulting admissible region forms arc(s), point(s), or a combination of the same great circle.

If the previous admissible region is a null region, a new constraint does not generate any new region, and the resulting admissible region is a null region.

Let us consider the case where the new constraint maintains the rank as $M-1$. 
If the previous admissible region is a polygonal region, the new constraint great circle is either one of the great circles that form the edge of the polygonal region. When the former case occurs, the great circle maintains the polygonal region or converts the polygonal region into arc(s), point(s) or a combination on the new great circle. When the new great circle does not join one of the polygonal edges, the polygonal region disappears or is maintained.

When the previous admissible region is arc(s), point(s), or a combination on a great circle, the new constraint great circle is either the same great circle or not. When it is the same great circle, the constraint either shortens the admissible arc or converts it into a point or reduces the number of arc(s) and point(s) but does not generate a new pattern. When the new great circle is not the same great circle, the new constraint removes some arc(s), point(s), or a combination, but it does not generate a new pattern.

If the previous admissible region is a null region, the new constraint does not generate any new region.

From this discussion, we can conclude that $N$ constraints also form the three patterns.

\end{proof}

In conclusion, we have the following 12 topological patterns of admissible axes on the Gaussian sphere. 
\begin{itemize}
    \item Whole spherical surface -- 3 DOFs
    \item Hemisphere surface -- 2.5 DOFs
    \item Crescent region -- 2.5 DOFs
    \item Polygonal region -- 2.5 DOFs
    \item Combination of arcs and points on a great circle -- 2 DOFs
    \item Pair of arcs on a great circle -- 2 DOFs
    \item Two pairs of points on a great circle -- 2 DOFs
    \item Combination of arcs and points on a great semicircle
    \item Two points on a great semicircle
    \item One arc on a great semicircle -- 1.5 DOFs
    \item Two points on a half great circle --- 1 DOFs
    \item One point -- 0.5 DOFs
    \item Null region -- 0 DOFs
\end{itemize}

\section{State transitions in the rotation states}

In this appendix, state transitions in pure rotation are analyzed in the same way as the transition in pure translation using the disassembly directions~\cite{ikeuchi1994toward}. The following discussion considers the reversibility of the transition and analyzes such transitions in the direction to decrease the contacting environment points. For the rotation of a rigid body, the rotation state differs depending on where the rotation center is located. Therefore, this appendix analyzes such transitions by assuming one particular point as the rotation center with respect to constraining the environment points; the position of the rotation center is preserved before and after the transition. 

\paragraph{RT}

The state RT occurs at one point contact with respect to a rotation center, as shown on the left side of Figure~\ref{fig:RT-NR}, where C denotes the rotation center and A denotes the constraining contact point. By providing a counterclockwise screw at C, as shown in the figure, the contact point A to the environment causes a detaching displacement, it moves to the air in the figure, and it leads to the NR state, as shown on the right side of the figure. Thus, the transition between the RT and NR states exists. By reverse rotation in the NR state, point A hits the environment and the RT state 

\begin{figure}[h]
    \centering
    \includegraphics[width=\columnwidth]{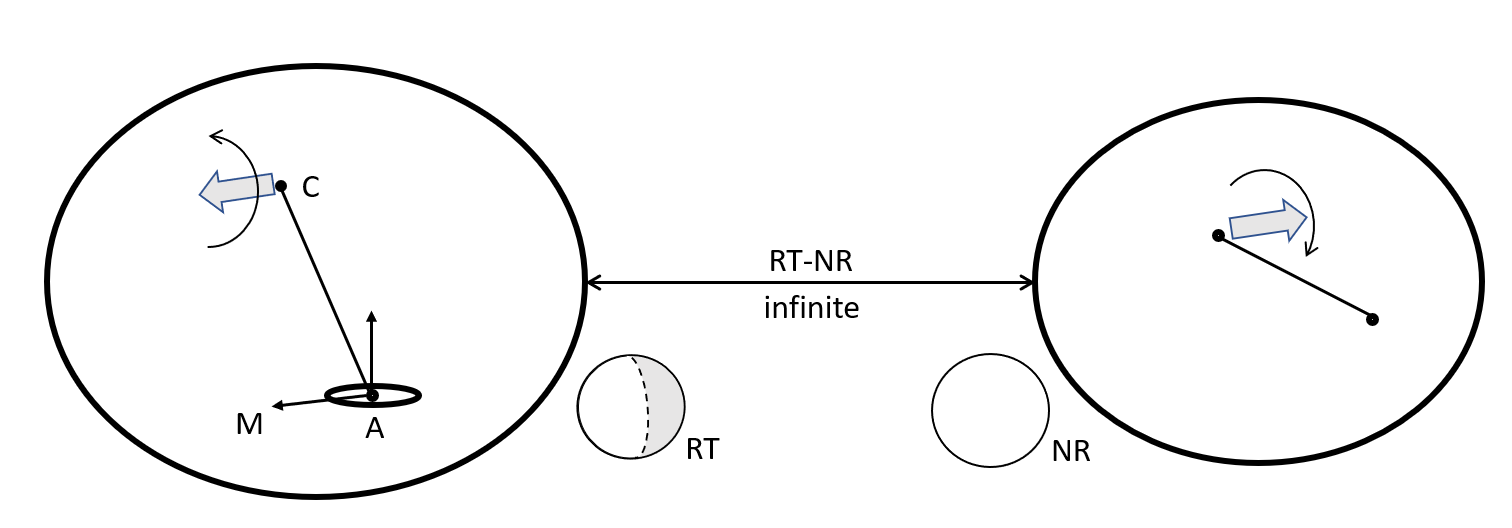}
    \caption{Transition between RT and NR.}
    \label{fig:RT-NR}
\end{figure}

\paragraph{SP}
From the SP state, an infinitesimal rotation does not cause any state transitions. A finite rotation causes two different transitions depending on the shape of the contact surface. The first case occurs, as shown in Figure~\ref{fig:SP-RT}, when the finite rotation makes one contact B to disappear, while it maintains another contact A.

\begin{figure}[h]
    \centering
    \includegraphics[width=\columnwidth]{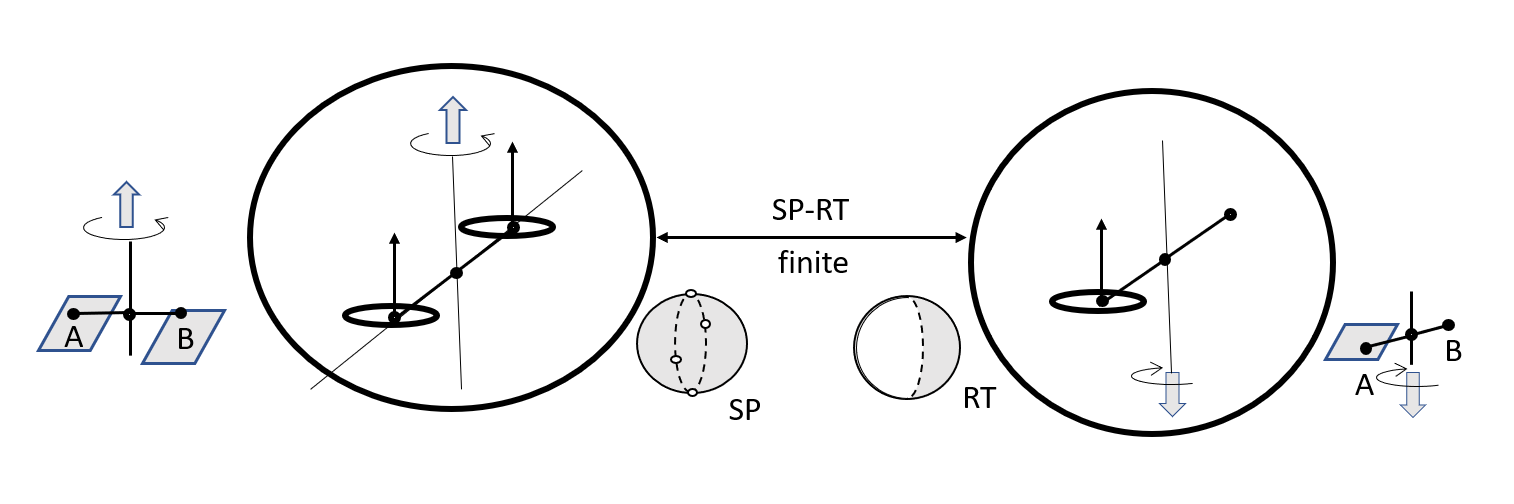}
    \caption{SP-RT transition}
    \label{fig:SP-RT}
\end{figure}

In the same manner, when both contacts A and B disappear simultaneously owing to the shape of the contact environment, a transition from the SP state to the NR state occurs.

\paragraph{OS}
In the OS state, an infinitesimal rotation about the axis through A, B, and C causes a detaching displacement at D; the transition to the PS state occurs. See Figure~\ref{fig:OS-SP}.

\begin{figure}[h]
    \centering
    \includegraphics[width=\columnwidth]{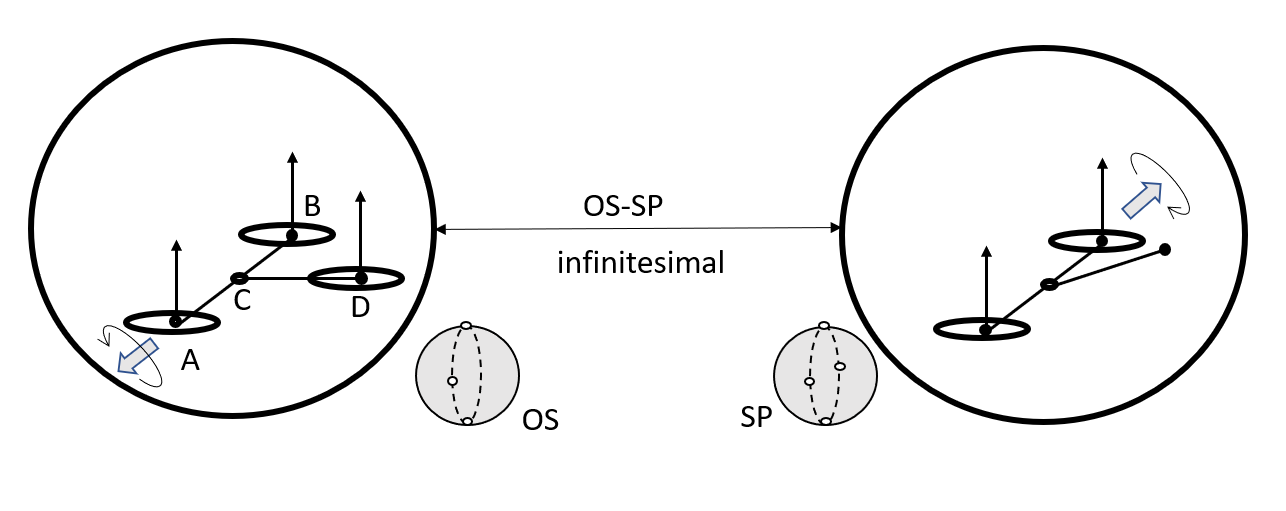}
    \caption{OS-SP transition}
    \label{fig:OS-SP}
\end{figure}

When a finite rotation occurs around the normal direction at the contact points depending on the shape of the environment surface, three different transitions occur. When the contact point C is lost, the transition to the SP state occurs. When the two contact points (for example, B and D) are lost simultaneously, it transitions to RT state. Finally, all three contact points are lost simultaneously, and it transitions to the NR state.

\paragraph{RV}

In the RV state, only finite rotations cause state transitions. Finite rotations of two-point contact, depending on the shape of the environment surface, cause transitions either to the RT or the NR state. Figure ~\ref{fig:RV-RT} shows the transition to the RT state.

\begin{figure}
    \centering
    \includegraphics[width=\columnwidth]{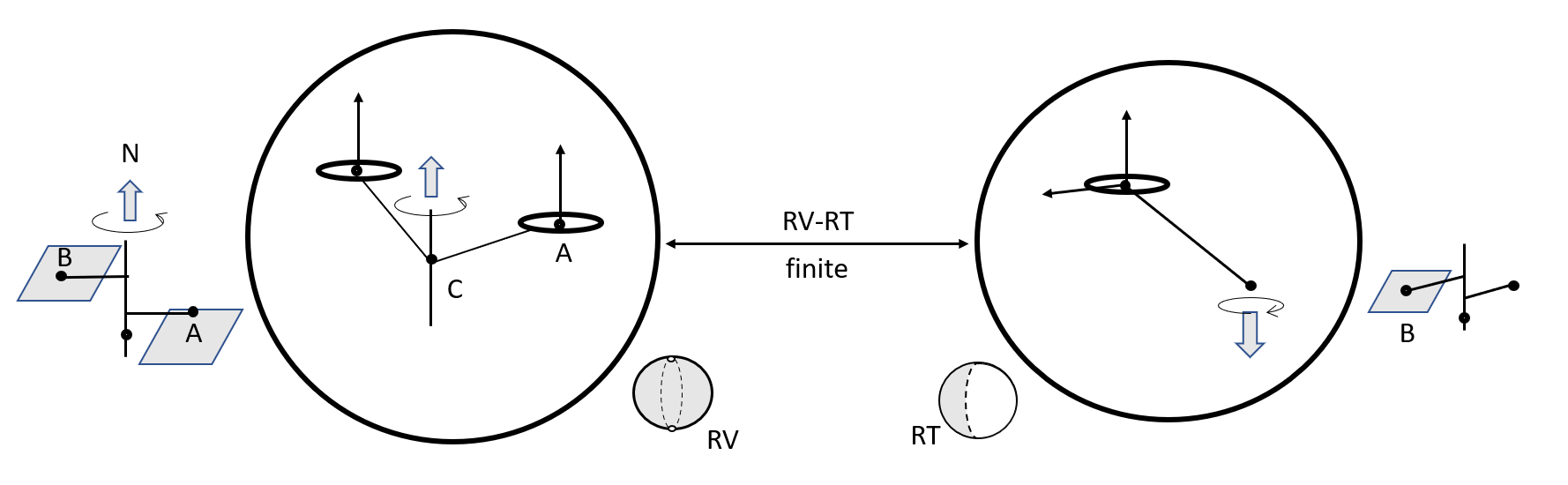}
    \caption{RV-RT transition}
    \label{fig:RV-RT}
\end{figure}

As a singular case, when the RV state is configured with a three-point contact, as shown in Figure~\ref{fig:RV-SP}, the transition to the SP state occurs, and it depends on the shape of the environment.

\begin{figure}[h]
    \centering
    \includegraphics[width=\columnwidth]{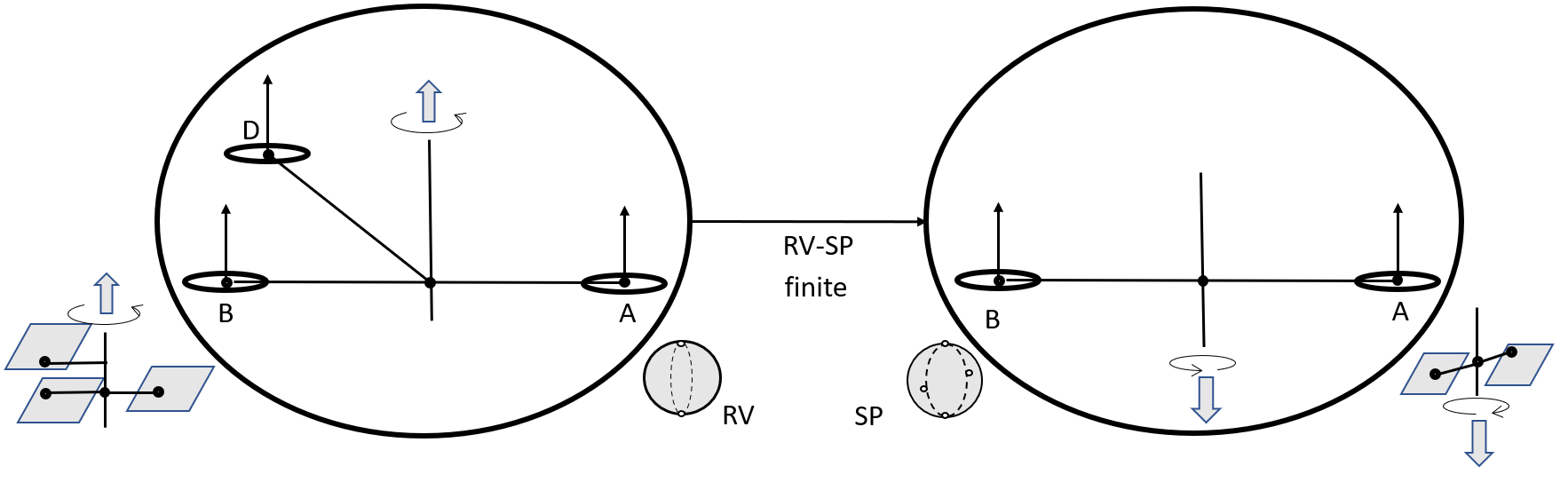}
    \caption{RV-SP transition}
    \label{fig:RV-SP}
    
\end{figure}

\paragraph{OR}
From the OR state, an infinitesimal rotation, which causes a detaching displacement at A, is the only possible rotation. Once this rotation occurs, the transition to the RV state occurs, as shown in Figure\_ref{fig:OR-RV}.

\begin{figure}
    \centering
    \includegraphics[width=\columnwidth]{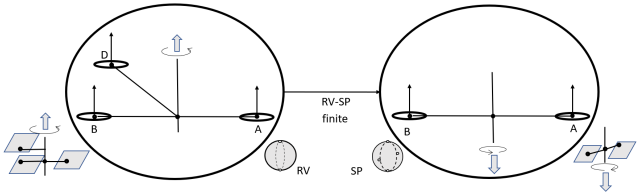}
    \caption{OR-RV transition}
    \label{fig:OR-RV}
    \end{figure}
    
\paragraph{State transitions in rotation}
From the above discussion, a transition graph for rotation can be obtained, as shown in Figure~\ref{fig:state-transition-graph}. As shown in the figure, when maintain rotations exist, such self-state transitions are also included. 

\begin{figure}
    \centering
    \includegraphics[width=\columnwidth]{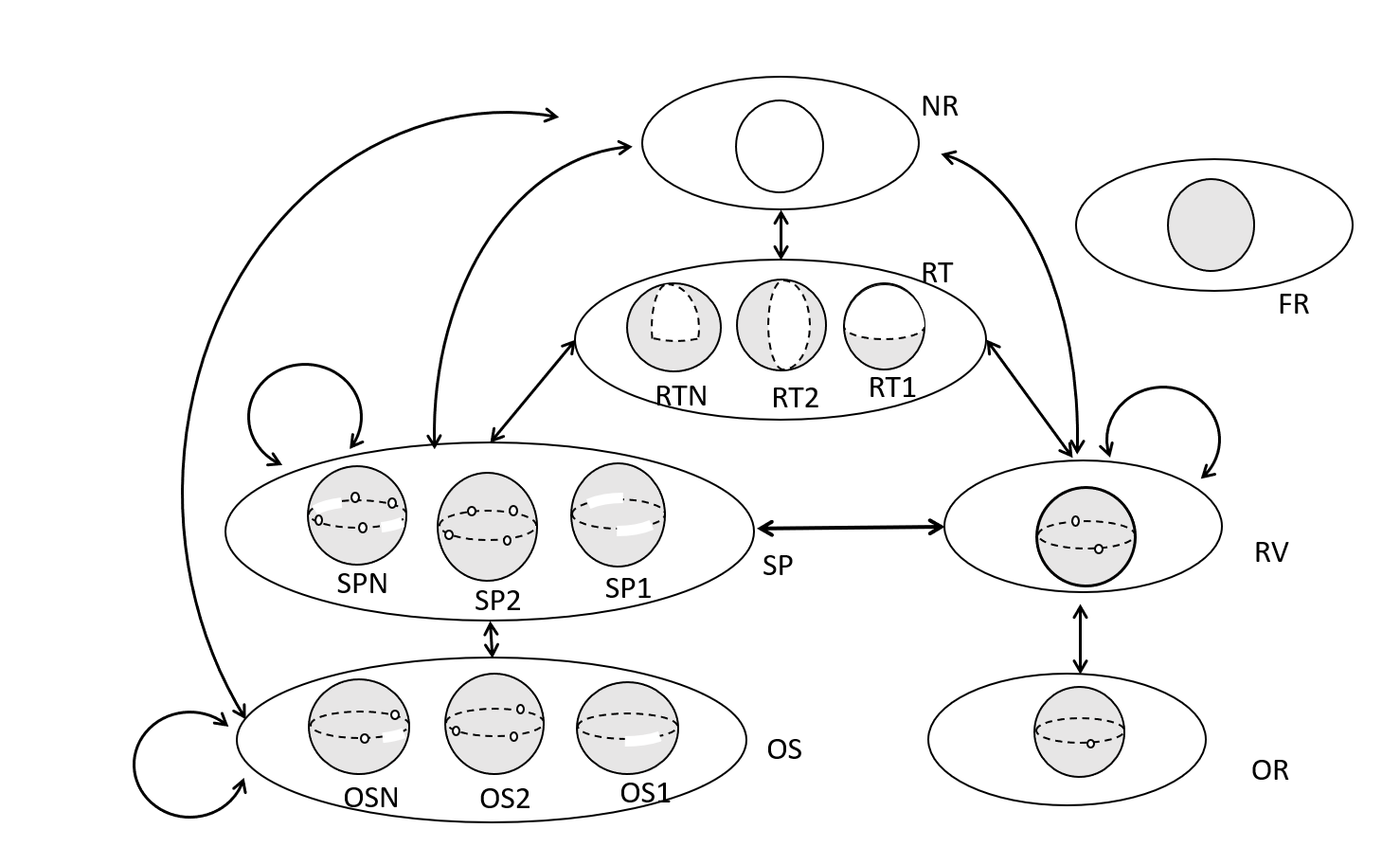}
    \caption{State transitions in rotation}
    \label{fig:state-transition-graph}
\end{figure}

\end{document}